\definecolor{lightgray}{RGB}{242, 242, 242}
\crefname{assumption}{Assumption}{Assumptions}
\crefname{algorithm}{Algorithm}{Algorithms}
\DeclareRobustCommand{\ie}{i.e.,\@\xspace}
\DeclareRobustCommand{\eg}{e.g.,\@\xspace}
\DeclareRobustCommand{\wrt}{w.r.t.\@\xspace}
\declaretheorem[numberwithin=section]{thm}
\declaretheorem[sibling=thm]{theorem}
\declaretheorem[numberwithin=section]{assumption}
\declaretheorem[]{proof sketch}
\declaretheorem[]{definition}
\DeclareMathOperator*{\EV}{\mathbb{E}}
\newcommand{\A}{\mathcal{A}}
\DeclareMathOperator*{\argmax}{arg\,max}
\newcommand{\X}{\mathcal{X}}
\newcommand{\R}{\mathbb{R}}
\newcommand{\F}{\mathcal{F}}
\newcommand{\D}{\mathcal{D}}
\newcommand{\Q}{\mathcal{Q}}
\newcommand{\G}{\mathcal{G}}
\newcommand{\mypar}[1]{\textbf{#1.}}
\newcommand{\entropy}{\mathcal{H}}
\DeclareMathOperator{\mP}{\mathbb{P}}
\DeclareMathOperator{\mF}{\mathbb{F}}
\newcommand{\der}{\mathrm{d}}
\newcommand{\divergence}{\mathrm{div}}
\newcommand{\LinearFineTuningSolver}{\textsc{\small{EntropyRegularizedControlSolver}}\xspace}
\newcommand{\noise}{U}
\newcommand{\bias}{b}
\newcommand{\hist}{\mathcal{T}}
\newcommand{\pMD}{p_\sharp}
\newcommand{\step}{\gamma}
\newcommand{\norm}[1]{\left\| #1 \right\|}
\newcommand{\tick}{\ding{51}}  
\newcommand{\cross}{\ding{55}} 
\definecolor{darkgreen}{RGB}{0,100,0}
\definecolor{darkorange}{RGB}{255,140,0}
\newcommand{\AlgNameLong}{Flow Density Control\xspace}
\newcommand{\AlgNameShort}{\textsc{\small{FDC}}\xspace}
\newcommand{\AlgNameDef}{\textbf{F}low \textbf{D}ensity \textbf{C}ontrol  (\textsc{\small{FDC}}\xspace)}
\newcommand{\AlgNameShortAM}{\textsc{\small{AM}}\xspace}
\definecolor{myviolet}{rgb}{0.6, 0.4, 0.8}
\definecolor{mygreen}{rgb}{0.0, 0.5, 0.0}
\newcommand{\debug}[1]{#1}
\newcommand{\newmacro}[2]{\newcommand{#1}{{#2}}}		
\newcommand{\dual}{h}
\newcommand{\run}{k}
\newcommand{\state}{\dual}
\newcommand{\curr}[1][\state]{\debug{#1}^{\run}}		
\newcommand{\efftime}{\tau}
\newcommand{\apt}[2][]{\state^{#1}(#2)}	
\newcommand{\ctime}{t}	
\newcommand{\defeq}{\coloneqq}
\newmacro{\temp}{\eta}		
\newmacro{\points}{\mathcal{Z}}		
\newmacro{\intpoints}{\points^{\circ}}		
\newmacro{\point}{\dual}		
\newmacro{\pointalt}{\alt\point}		
\newmacro{\ctimealt}{s}		
\newmacro{\cstart}{0}		
\newmacro{\horizon}{T}		
\newmacro{\vecfield}{V}		
\newmacro{\signal}{V}		
\newmacro{\error}{W}		
\newmacro{\brown}{W}		
\newmacro{\dstate}{Y}		
\newmacro{\flowmap}{\Theta}		
\newcommandtwoopt{\flow}[2][\ctime][\point]{\flowmap_{#1}(#2)}
\newmacro{\minmax}{\Phi}		
\newmacro{\minvar}{x}		
\newmacro{\minvaralt}{\alt x}		
\newmacro{\minvars}{\mathcal{X}}		
\newmacro{\maxvar}{y}		
\newmacro{\maxvaralt}{\alt y}		
\newmacro{\maxvars}{\mathcal{Y}}		
\newmacro{\minsol}{\sol[\minvar]}		
\newmacro{\maxsol}{\sol[\maxvar]}		
\newcommand{\sol}[1][\point]{#1^{\ast}}		
\newmacro{\set}{\mathcal{S}}		
\newmacro{\open}{\mathcal{U}}		
\newmacro{\closed}{\mathcal{C}}		
\newmacro{\cpt}{\mathcal{K}}		
\newmacro{\nhd}{\mathcal{U}}		
\newcommand{\afterhead}{.}		
\newcommand{\para}[1]{\paragraph{\textbf{#1\afterhead}}}
\newacro{APT}{asymptotic pseudotrajectory}
\newacro{GD}{gradient dynamics}
\newacro{GF}{gradient flow}
\newacro{ICT}{internally chain-transitive}
\newacro{MDS}{martingale difference sequence}
\newacro{NE}{Nash equilibrium}
\newacro{ODE}{ordinary differential equation}
\newacro{SA}{stochastic approximation}
\newacro{SFO}{stochastic first-order oracle}
\newacro{SG}{stochastic gradient}
\newacro{SP}{saddle-point}
\newacro{WAC}{weak asymptotic coercivity}
\newacro{AH}{Arrow\textendash Hurwicz}
\newacro{BDG}{Burkholder\textendash Davis\textendash Gundy}
\newacro{ConO}{consensus optimization}
\newacro{RM}{Robbins\textendash Monro}
\newacro{KW}{Kiefer\textendash Wolfowitz}
\newacro{GDA}{gradient descent/ascent}
\newacro{SGA}{symplectic gradient adjustment}
\newacro{SGD}{stochastic gradient descent}
\newacro{SGDA}{stochastic gradient descent/ascent}
\newacro{SPSA}{simultaneous perturbation stochastic approximation}
\newacro{ASGDA}[alt-SGDA]{alternating stochastic gradient descent/ascent}
\newacro{SEG}{stochastic extra-gradient}
\newacro{EG}{extra-gradient}
\newacro{PEG}{Popov's extra-gradient}
\newacro{RG}{reflected gradient}
\newacro{OG}{optimistic gradient}
\newacro{PPM}{proximal point method}
\newacro{GAN}{generative adversarial network}
\newacro{NN}{neural network}
\newacro{FTRL}{``follow the regularized leader''}
\newacro{CGD}{Competitive Gradient Descent}
\newacro{wp1}[w.p.$1$]{with probability $1$}
\definecolor{pastelblueold}{RGB}{56,146,236}
\definecolor{pastelblue}{RGB}{43,115,187}
\definecolor{pastelgreen}{RGB}{63,159,95}
\title{Flow Density Control: Generative Optimization Beyond Entropy-Regularized Fine-Tuning}
\author{%
  Riccardo De Santi \\
  ETH Zurich \\
  ETH AI Center\\
  \texttt{rdesanti@ethz.ch} \\
  \And
   Marin Vlastelica \\
   ETH Zurich \\
   ETH AI Center\\
   \texttt{marin.vlastelica@inf.ethz.ch} \\
  \And
   Ya-Ping Hsieh\\
   ETH Zurich \\
   \texttt{yaping.hsieh@inf.ethz.ch} \\
   \And
   Zebang Shen \\
   ETH Zurich \\
   \texttt{zebang.shen@inf.ethz.ch} \\
   \And
   Niao He \\
   ETH Zurich \\
   ETH AI Center\\
   \texttt{niaohe@ethz.ch} \\
   \And
   Andreas Krause \\
   ETH Zurich \\
   ETH AI Center\\
   \texttt{krausea@ethz.ch} \\
}
\begin{document}

\maketitle

\begin{abstract}
\looseness -1 Adapting large-scale foundation flow and diffusion generative models to optimize task-specific objectives while preserving prior information is crucial for real-world applications such as molecular design, protein docking, and creative image generation. 
Existing principled fine-tuning methods aim to maximize the expected reward of generated samples, while retaining knowledge from the pre-trained model via KL-divergence regularization. 
In this work, we tackle the significantly more general problem of optimizing general utilities beyond average rewards, including risk-averse and novelty-seeking reward maximization, diversity measures for exploration, and experiment design objectives among others. Likewise, we consider more general ways to preserve prior information beyond KL-divergence, such as optimal transport distances and Rényi divergences. 
To this end, we introduce \AlgNameDef, a simple algorithm that reduces this complex problem to a specific sequence of simpler fine-tuning tasks, each solvable via scalable established methods. We derive convergence guarantees for the proposed scheme under realistic assumptions by leveraging recent understanding of mirror flows. Finally, we validate our method on illustrative settings, text-to-image, and molecular design tasks, showing that it can steer pre-trained generative models to optimize objectives and solve practically relevant tasks beyond the reach of current fine-tuning schemes.

\end{abstract}

\addtocontents{toc}{\protect\setcounter{tocdepth}{-1}}
\vspace{-2mm}
\section{Introduction} \vspace{-2mm}
\label{sec:introduction}
\begin{wrapfigure}{r}{0.45\textwidth}
  \centering \vspace{-7mm}
  \includegraphics[width=0.4\textwidth]{images/table_img.pdf}
  \caption{\looseness -1 We extend the capabilities of current fine‐tuning schemes from KL‐regularized expected reward maximization (left) to the optimization of arbitrary distributional utilities $\F$ under general divergences $\D$ (right).
} \vspace{-5mm}
  \label{fig:table_F}
\end{wrapfigure}

\looseness -1 Large-scale generative modeling has recently seen remarkable advancements, with flow~\citep{lipman2022flow, lipman2024flow} and diffusion models~\citep{sohl2015deep, song2019generative, ho2020denoising} standing out for their ability to produce high-fidelity samples across a wide range of applications, from chemistry~\citep{hoogeboom2022equivariant} and biology~\citep{corso2022diffdock} to robotics~\citep{chi2023diffusion}. 
However, approximating the data distribution is insufficient for real-world applications such as scientific discovery~\citep{bilodeau2022generative, zeni2023mattergen}, where one typically wishes to generate samples optimizing specific utilities, \eg molecular stability and diversity, while preserving certain information from a pre-trained model. This problem has recently been tackled via fine-tuning in the case where the utility corresponds to the expected reward of generated samples, and pre-trained model information is retained via KL-divergence regularization, as shown in Fig. \ref{fig:table_F} (left). Crucially, this specific fine-tuning problem can be solved via entropy-regularized control formulations~\citep[\eg][]{domingo2024adjoint, uehara2024fine, tang2024fine} with successful applications in real-world domains such as image generation~\citep{domingo2024adjoint}, molecular design~\citep{uehara2024feedback}, or protein engineering~\citep{uehara2024feedback}. 

\looseness -1 Unfortunately, many practically relevant tasks cannot be captured by this formulation. For instance, consider the tasks of \emph{risk‑averse} and \emph{novelty‑seeking} reward maximization. In the former case, one wishes to steer the generative model toward distributions with controlled worst-case rewards, thereby improving validity and safety. In the latter case, one aims to control the upper tail of the reward distribution to maximize the probability of generating exceptionally promising designs, \eg for scientific discovery.
Other applications that cannot be captured via maximization of simple expectations include manifold exploration~\citep{de2025provable}, model de-biasing~\citep{decruyenaere2024debiasing}, and optimal experimental design~\citep{mutny2023active, de2024geometric} among others. 
Similarly, preserving prior information via a KL divergence has known drawbacks. For instance, it can lead to missing of low-probability yet valuable modes~\citep{li2016renyi, pandey2024heavy}, and it prevents from leveraging the geometry of the space even when this is known, \eg in protein docking~\citep{corso2022diffdock}. Replacing KL with alternative divergences can address these shortcomings.
Driven by these motivations, in this work we aim to answer the following fundamental question (see Fig. \ref{fig:table_F}):
\vspace{-2mm}
\begin{center} 
\emph{How can we provably fine-tune a flow or diffusion model to optimize any user-specified utility \\ while preserving prior information via an arbitrary divergence?} \vspace{-1mm}
\end{center}
\looseness -1 Answering this would contribute to the algorithmic-theoretical foundations of \emph{generative optimization}. \vspace{-7mm}
\looseness -1 \paragraph{{Our approach}} We tackle this challenge by first introducing the formal problem of \emph{generative optimization via fine-tuning}.
Then, we shed light on why this formulation is strictly more expressive than current fine-tuning problems~\citep{domingo2024adjoint, tang2024fine}, and present a sample of novel practically relevant utilities and divergences (Sec. \ref{sec:problem_setting}). Next, we introduce \AlgNameDef, a simple sequential scheme that can fine-tune models to optimize general objectives beyond the reach of entropy-regularized control methods. This is achieved by leveraging recent machinery from Convex~\citep{hazan2019maxent} and General Utilities RL~\citep{zhang2020variational} (Sec. \ref{sec:algorithm}). We provide rigorous convergence guarantees for the proposed algorithm in both a simplified scenario, via convex optimization analysis~\citep{nemirovskij1983problem, lu2018relatively}, and in a realistic setting, by building on recent understanding of mirror flows~\citep{hsieh2019finding} (Sec. \ref{sec:theory}). Finally, we provide an experimental evaluation of the proposed method, demonstrating its practical relevance on both synthetic and high-dimensional image and molecular generation tasks, showing how it can steer pre-trained models to solve tasks beyond the inherent limits of current fine-tuning schemes (Sec. \ref{sec:experiments}).
\vspace{-3mm}
\paragraph{{Our contributions}} To sum up, in this work we contribute
\begin{itemize}[noitemsep,topsep=0pt,parsep=0pt,partopsep=0pt,leftmargin=*]
    \item A formalization of the {\em generative optimization} problem, which extends current fine-tuning formulations beyond linear utilities and general divergences  (Sec. \ref{sec:problem_setting}).
    \item {\em \AlgNameDef}, a principled algorithm capable of optimizing functionals beyond the reach of current fine-tuning schemes based on entropy-regularized control/RL (Sec. \ref{sec:algorithm}).
    \item Convergence guarantees for the presented algorithm both under simplified and realistic assumptions leveraging recent understanding of mirror flows (Sec. \ref{sec:theory}).
    \item An experimental evaluation of \AlgNameShort showcasing its practical relevance on both illustrative and high-dimensional text-to-image and molecular design tasks, showing how it can steer pre-trained models to solve tasks beyond the capabilities of current fine-tuning schemes. (Sec. \ref{sec:experiments}).
\end{itemize}

\vspace{-2mm}
\section{Background and Notation} \vspace{-2mm}
\label{sec:background}
\mypar{General Notation}
We denote with $\X \subseteq \R^d$ an arbitrary set. Then, we indicate the set of Borel probability measures on $\X$ with $\mP(\X)$, and the set of functionals over the set of probability measures $\mP(\X)$ as $\mF(\X)$. Given an integer $N$, we define $[N] \coloneqq \{1, \ldots, N\}$. 

\mypar{Generative Flow Models}
Generative models aim to approximately sample novel data points from a data distribution $p_{data}$. Flow models tackle this problem by transforming samples $X_0 = x_0$ from a source distribution $p_0$ into samples $X_1=x_1$ from the target distribution $p_{data}$\cite{lipman2024flow, farebrother2025temporal}. Formally, a \emph{flow} is a time-dependent map $\psi: [0,1]\times \R^d \to \R$ such that $\psi: (t,x) \to \psi_t(x)$. A \emph{generative flow model} is a continuous-time Markov process $\{X_t\}_{0 \leq t \leq 1}$ obtained by applying a flow $\psi_t$ to $X_0 \sim p_0$ as $X_t = \psi_t(X_0)$, $ t \in [0,1]$, such that $X_1 = \psi_1(X_0) \sim p_{data}$. In particular, the flow $\psi$ can be defined by a \emph{velocity field} $u: [0,1] \times \R^d \to \R^d$, which is a vector field related to $\psi$ via the following ordinary differential equation (ODE), typically referred to as \emph{flow ODE}:\vspace{-1.5mm}
\begin{equation}
    \frac{\der}{\der t} \psi_t(x) = u_t(\psi_t(x)) \label{eq:flow_diff_eq} 
\end{equation}
with initial condition $\psi_0(x) = 0$. A flow model $X_t = \psi_t(X_0)$ induces a probability path of \emph{marginal densities} $p = \{p_t\}_{0 \leq t \leq 1}$ such that at time $t$ we have that $X_t \sim p_t$. Given a velocity field $u$ and marginal densities $p$, we say that $u$ generates the marginal densities $p = \{p_t\}_{0 \leq t \leq 1}$ if $X_t = \psi_t(X_0) \sim p_t$ for all $t \in [0,1)$. This is the case if the pair $(u, p)$ satisfy the \emph{Continuity Equation}: \vspace{-3mm}
\begin{equation}
    \frac{\der}{\der t}p_t(x) + \divergence(p_t u_t)(x) = 0 \label{eq:continuity_equation}
\end{equation}
\looseness -1 In this case, we denote by $p^u$ the probability path of marginal densities induced by the velocity field $u$. Flow matching~\cite{lipman2022flow, liu2022flow, albergo2022building, lipman2024flow} can estimate a velocity field $u^{\theta}$ s.t. the induced marginal densities $p^{u_\theta}$ satisfy $p^{u_\theta}_0 = p_0$ and $p^{u_\theta}_1 = p_{data}$, where $p_0$ denotes the source distribution, and $p_{data}$ the target data distribution. Interestingly, diffusion models~\citep{song2019generative} (DMs) admit an equivalent ODE‐based formulation with identical marginal densities to their original SDE dynamics~\citep[Chapter 10]{lipman2024flow}. Consequently, although in this work we adopt the notation of flow models, our contributions carry over directly to DMs.

\mypar{Continuous-time Reinforcement Learning}
We formulate finite-horizon continuous-time reinforcement learning (RL) as a specific class of optimal control problems \citep{wang2020reinforcement, jia2022policy, treven2023efficient, zhao2024scores}. Given a state space $\X$ and an action space $\A$, we consider the transition dynamics governed by the following ODE:\vspace{-1.5mm}
\begin{equation} 
      \frac{\der}{\der t} \psi_t(x) = a_t(\psi_t(x)) \label{eqn_continuous_RL} 
\end{equation}
\looseness -1 where $a_t \in \A$ is a selected action. We consider a state space $\X \coloneqq \R^d \times [0,1]$, and denote by (Markovian) deterministic policy a function $\pi_t(X_t) \coloneqq \pi(X_t, t) \in \A$ mapping a state $(x,t) \in \X$ to an action $a \in \A$ such that $a_t = \pi(X_t, t)$, and denote with $p_t^\pi$ the marginal density at time $t$ induced by policy $\pi$. 

\mypar{Pre-trained Flow Models as an RL policy}
\looseness -1 A pre-trained flow model with velocity field $u^{pre}$ can be interpreted as an action process $a^{pre}_t \coloneqq u^{pre}(X_t, t)$, where $a_t^{pre}$ is determined by a continuous-time RL policy via $a_t^{pre} = \pi^{pre}(X_t, t)$~\citep{de2025provable}. Therefore, we can express the flow ODE induced by a pre-trained flow model by replacing $a_t$ with $a^{pre}$ in Eq. \eqref{eqn_continuous_RL}, and denote the pre-trained model by its (implicit) policy $\pi^{pre}$, which induces a marginal density $p^{pre}_1 \coloneqq p^{\pi^{pre}}_1$ approximating $p_{data}$.

\vspace{-2mm}
\section{Formal Problem: a General Framework for Generative Optimization} \vspace{-2mm}
\label{sec:problem_setting}
\begin{figure*}[t]
    \centering
    \begin{subfigure}{0.55\textwidth} 
        \centering
        \includegraphics[width=\textwidth, keepaspectratio]{}
        \caption{Generative Optimization via Flow Model Fine-tuning.}
        \label{fig:gen_opt_drawing}
    \end{subfigure}%
    \hspace{15pt}
    \begin{subfigure}{0.4\textwidth} 
        \centering
        \raisebox{1ex}[0pt][0pt]{
        \includegraphics[width=\textwidth, keepaspectratio]{images/control_hierarchy.pdf}
        }
        \caption{GO expressivity hierarchy}
        \label{fig:control_hierarchy}
    \end{subfigure}\vspace{-0pt}
    \caption{\looseness -1 (\ref{fig:gen_opt_drawing}) Pre-trained and fine-tuned policies inducing densities $p^{pre}_1$ and optimal density $p_1^*$ \wrt utility $\F$ and divergence $\D$.  (\ref{fig:control_hierarchy}) Expressivity and control hierarchy for generative optimization. }
    \label{fig:joint_top_figures} \vspace{-0pt}
\end{figure*}

\looseness -1 In this section, we aim to formally introduce the general problem of generative optimization (GO) via fine-tuning. Formally, we wish to adapt a pre-trained generative flow model $\pi^{pre}$ to obtain a new model $\pi^*$ inducing an ODE:\vspace{-1mm}
\begin{equation} 
      \frac{\der}{\der t} \psi_t(x) = a^*_t(\psi_t(x)) \quad  \text{with} \quad a_t^* = \pi^*(x,t), \label{eq:controlled_ODE}
\end{equation}
\looseness -1 such that instead of imitating the data distribution $p_{data}$, as typically in generative modeling, it induces a marginal density $p_1^{\pi^*}$ that maximizes a utility measure $\F: \mP(\X) \to \R$, while preserving information from the pre-trained model $\pi^{pre}$ via regularization with an arbitrary divergence $\D( \cdot \, \| \, p^{pre})$. This algorithmic problem is illustrated in Fig. \ref{fig:gen_opt_drawing}, and formalized in the following.
\vspace{-1.3mm}
\begin{tcolorbox}[colframe=white!, top=2pt,left=2pt,right=2pt,bottom=2pt]
\begin{center}
\textbf{Generative Optimization via Flow Model Fine-Tuning}
\begin{align}
    &\argmax_{\pi}\quad  \F \left( p_1^{\pi} \right) - \alpha \D( p_1^{\pi}  \, \| \,  p_1^{pre})  \text{ s.t.} \frac{\der}{\der t}p_t(x) + \divergence(p_t a_t)(x) = 0 \text{ with } a_t = \pi(x,t)  \label{eq:gen_opt_problem}
\end{align}
\end{center}
\end{tcolorbox}
\vspace{-1.3mm}
\looseness -1 In this formulation, $\F$ and $\D$ are both functionals mapping the marginal density $p^\pi_1$ induced by policy $\pi$ to a scalar real number, namely $\F, \D: \mP(\X) \to \R$. The constraint in Eq. \eqref{eq:gen_opt_problem} is the (\emph{controlled}) Continuity Equation (see Eq. \eqref{eq:continuity_equation}), which relates the control policy $\pi$ to the induced marginal density $p_1^\pi$.
\vspace{-3mm}
\subsection{The sub-case of KL-regularized reward maximization via entropy-regularized control} \vspace{-2.5mm}
\looseness -1 Current fine-tuning schemes for flow generative models based on RL and control-theoretic formulations ~\citep[\eg][]{domingo2024adjoint, uehara2024fine} aim to tackle the following problem, where we omit the flow constraint for clarity:
\vspace{-1.3mm}
\begin{tcolorbox}[colframe=white!, top=2pt,left=2pt,right=2pt,bottom=2pt]
\begin{center}
\textbf{Linear Generative Optimization via Flow Model Fine-Tuning}
\begin{equation}
    \argmax_{\pi}\quad  \EV_{x \sim p_1^\pi} [r(x)] - \alpha D_{KL}( p_1^{\pi}  \, \| \,  p_1^{pre}) \label{eq:linear_fine_tuning}
\end{equation}
\end{center}
\end{tcolorbox}
\vspace{-1.3mm}
\looseness -1 Crucially, the common problem in Eq. \eqref{eq:linear_fine_tuning}, which we denote by \emph{Linear}\footnote{For clarity, we adopt the term \emph{linear} motivated by the linear utility even though the KL is non-linear.} GO, is the specific sub-case of the generative optimization problem in Eq. \eqref{eq:gen_opt_problem}, where the utility $\F$ is a linear functional corresponding to the expectation of a (reward) function $r: \X \to \R$, and $\D$ is the Kullback–Leibler divergence:
\begingroup
\setlength\abovedisplayskip{4pt}   
\setlength\belowdisplayskip{1pt}   
\begin{equation}
    \F(p_1^\pi) = \langle p_1^\pi, r \rangle = \EV_{x \sim p_1^\pi} [r(x)] \quad \text{ and } \quad \D(p_1^{\pi} \,\|\, p_1^{pre}) = D_{KL}( p_1^{\pi}  \, \| \,  p_1^{pre})
\end{equation}
\endgroup
\looseness -1 This specific fine-tuning problem can be solved via entropy-regularized (or relaxed) control~\citep{domingo2024adjoint}. 

\definecolor{lightblue}{RGB}{222,235,248}   
\definecolor{lightorange}{RGB}{255,237,217}
\colorlet{lightblueII}{lightblue!50!white}   
\colorlet{lightblueIII}{lightblue!75!white}  
\colorlet{lightorangeII}{lightorange!50!white}   
\colorlet{lightorangeIII}{lightorange!75!white}  

\begin{table*}[t]
\setlength{\tabcolsep}{4pt}
\renewcommand{\arraystretch}{2.0}
\begingroup
\everymath{\displaystyle}
\vspace{-1mm}
\begin{sc}
\resizebox{\textwidth}{!}{%
\begin{tabular}{c c c c c}
\toprule
\multirow{2}{*}{Application} &
\multirow{2}{*}{Functional $\F$ / $\D$} &
\multirow{2}{*}{\shortstack[c]{Linear GO}} &
\multicolumn{2}{c}{Non-Linear GO} \\ \cmidrule(lr){4-5}
 & & & Convex & General \\
\midrule
\rowcolor{lightblueII} Reward optimization~\citep{domingo2024adjoint, uehara2024fine} &
  $\mathbb{E}_{x \sim p^\pi}\left[r(x)\right]$ &
  \tick & \tick & \tick \\
\rowcolor{lightblue}%
\multirow{2}{*}{\shortstack[c]{Manifold Exploration~\citep{de2025provable}\\[0.1em]Gen.\ model de-biasing}} &
  \multirow{2}{*}{$\entropy(p^\pi) \coloneqq -\EV_{x \sim p^\pi}[\log p^\pi(x)]$} &
  \multirow{2}{*}{\cross} & \multirow{2}{*}{\tick} & \multirow{2}{*}{\tick} \\[1.45em]
\rowcolor{lightblueII} \multirow{2}{*}{Risk-averse optimization} &
  $\mathrm{CVaR}^r_{\beta}(p^\pi) \coloneqq \EV_{x\sim p^\pi}[r(x) \mid r(x) \leq \mathrm{q}^r_{\beta}(p^\pi)]$ & \cross & \tick & \tick \\[0.6ex]
\rowcolor{lightblueII} &  $\mathbb{E}_{x \sim p^\pi}[r(x)] - \mathbb{V}\mathrm{ar}(p^\pi)$ & \cross & \cross & \tick \\
\rowcolor{lightblue}%
Novelty-seeking optimization &
  $\text{SQ}^r_{\beta}(p^\pi) \coloneqq \EV_{x\sim p^\pi}[r(x) \mid r(x) \geq \mathrm{q}^r_{\beta}(p^\pi)]$ & \cross & \cross & \tick \\
\rowcolor{lightblueII} \multirow{2}{*}{Optimal Experiment Design} &
  $\mathrm{s}\left(\EV_{x\sim p^\pi}[\Phi(x)\Phi(x)^\top - \lambda \mathbb{I}]\right)$ & 
  \multirow{2}{*}{\cross} & \multirow{2}{*}{\tick} & \multirow{2}{*}{\tick} \\[0.6ex]
\rowcolor{lightblueII} &  $\mathrm{s}(\cdot) \in \{\log \det(\cdot), -\mathrm{Tr}(\cdot)^{-1}, -\lambda_{max}(\cdot) \}$  &  &  &  \\
\rowcolor{lightblue} \rule[-1.5ex]{0pt}{4.0ex}
Diverse modes discovery &
  $- \EV_z[D_{KL}(p^{\pi, z} \| \EV_k p^{\pi, k})]$ & \cross & \cross & \tick \\
\rowcolor{lightblueII} Log-Barrier Constrained Generation &
  $\mathbb{E}_{x \sim p^\pi}[r(x)] - \beta \log \left(\langle p^\pi, c \rangle - C \right)$ &
  \cross & \tick & \tick \\
\midrule
\rowcolor{lightorange} \rule[-1.5ex]{0pt}{4.0ex} Kullback–Leibler divergence~\citep{domingo2024adjoint, uehara2024fine} &
  $D_{KL}(p^\pi \, \| \, p^{pre}) = \int p^\pi(x)\log \frac{p^\pi(x)}{p^{pre}(x)} \, dx$ & \tick & \tick & \tick \\
\rowcolor{lightorangeII} \rule[-1.5ex]{0pt}{4.0ex} Rényi divergences &
  $D_\beta(p^\pi \, \| \, p^{pre}) \coloneqq \frac{1}{\beta -1}\log \int (p^\pi(x))^\beta (p^{pre})^{1-\beta} \, dx$ &
  \cross & \cross & \tick \\
\rowcolor{lightorange} \rule[-1.5ex]{0pt}{4.0ex} Optimal Transport distances &
  $W_p(p^\pi \, \| \, p^{pre}) \coloneqq \inf_{\gamma \in \Gamma(p^\pi, p^{pre})} \EV_{(x,y) \sim \gamma}[d(x,y)^p]^{\frac{1}{p}}$ &
  \cross & \cross & \tick \\
\rowcolor{lightorangeII} \rule[-1.5ex]{0pt}{4.0ex} Maximum Mean Discrepancy &
  $\mathrm{MMD}_k(p^\pi \, \| \, p^{pre}) \coloneqq  \| \mu_{p^\pi} - \mu_{p^{pre}}\|$, $\mu_p \coloneqq \EV_{x\sim p}[k(x, \cdot)]$ &
  \cross & \tick & \tick \\
\bottomrule
\end{tabular}
}
\end{sc}
\endgroup
\caption{\looseness-1 Examples of practically relevant utilities $\F$ (blue) and divergences $\D$ (orange). Apx. \ref{sec:app_functionals_details} provides mathematical details and practical applications for each functional. Notice that besides $\entropy$, all non-linear functionals presented are novel in the context of fine-tuning of diffusion and flow models. }
\label{table:list_functionals}\vspace{-5mm}
\end{table*}
\vspace{-2.5mm}
\subsection{Beyond Linear Generative Optimization: an Expressivity Viewpoint}\vspace{-2.5mm}
\looseness -1

\looseness -1 Let $\G(p_1^\pi)\;=\;\F(p_1^\pi)\; - \;\alpha\,\D(p_1^\pi\!\,\|\,p_1^{\mathrm{pre}})$ be the functional in Eq. \eqref{eq:gen_opt_problem}. Then we denote by \emph{Convex} GO the case where $\G$ is concave in $p_1^\pi$, and by \emph{General} GO the case for arbitrary, possibly non-convex functionals \footnote{For clarity, we use the term \emph{convex} GO, rather than concave GO, to denote the problem class where concave functionals are optimized.}. In terms of expressivity \textbf{Linear GO} $\subset$ \textbf{Convex GO}  $\subset$ \textbf{General GO}, as depicted in Fig. \ref{fig:control_hierarchy} (left). In Table \ref{table:list_functionals} we classify into these tree tiers a sample of practically relevant utilities ($\F$, blue) and divergences ($\D$, orange).  In Apx. \ref{sec:app_functionals_details} we report complete definitions and applications. Except for entropy~\citep{de2025provable} and KL, all non-linear functionals in Table \ref{table:list_functionals} are to our knowledge explicitly used for the first time in the flow and diffusion model fine-tuning literature, while vastly employed in other areas. Moreover, the framework presented in this work for GO (Eq. \ref{eq:gen_opt_problem}) applies to any new choice of $\F$ or $\D$.

\looseness -1 Given the generality of generative optimization (Eq.\eqref{eq:gen_opt_problem}), a natural question arises: how can it be solved algorithmically? In the next section, we answer this by leveraging recent machinery from Convex~\citep{hazan2019maxent}
and General-Utilities RL~\citep{zhang2020variational}, to derive a fine‑tuning scheme that handles both convex and general GO, thus going beyond current entropy-regularized control methods, as illustrated in Fig. \ref{fig:control_hierarchy} (right).

\vspace{-2mm}
\section{Algorithm: \AlgNameLong} \vspace{-2mm}
\label{sec:algorithm}
In this section, we introduce \AlgNameDef, see Alg. \ref{alg:fdc_algorithm}, which provably solves the generative optimization problem in Eq. \eqref{eq:gen_opt_problem} via sequential fine-tuning of the pre-trained model $\pi^{pre}$. 
\looseness -1 To this end, we recall the notion of first variation of a functional over a space of probability measures~\cite{hsieh2019finding}. A functional $\G \in \mF(\X)$, where $\G: \mP(\X) \to \R$, has first variation at $\mu \in \mP(\X)$ if there exists a function $\delta \G(\mu) \in \mF(\X)$ such that for all $\mu' \in \mP(\X)$ it holds that:\vspace{-1mm}
\begin{equation*}
    \G(\mu + \epsilon \mu') = \G(\mu) + \epsilon \langle \mu', \delta \G(\mu) \rangle + o(\epsilon). 
\end{equation*}
\looseness -1 where the inner product has to be interpreted as an expectation. Intuitively, the first variation of $\G$ at $\mu$, namely $\delta \G(\mu)$, can be interpreted as an infinite-dimensional gradient in the space of probability measures. Given this notion, and a pair of generative models represented via policies $\pi$ and $\pi'$, we can now state the following \emph{entropy-regularized first variation maximization} fine-tuning problem.
 \vspace{-1.3mm}
\begin{tcolorbox}[colframe=white!, top=2pt,left=2pt,right=2pt,bottom=2pt]
\begin{center}
\textbf{Entropy-Regularized First Variation Maximization}
\begin{equation}
    \argmax_{\pi}\quad  \langle \delta \G \left( p^{\pi'}_1 \right), p_1^{\pi} \rangle - \eta D_{KL}(p_1^{\pi} \,\|\, p_1^{\pi'})\label{eq:opt_first_variation}
\end{equation}
\end{center}
\end{tcolorbox}\vspace{-1.3mm}
Crucially, we can introduce a function $g: \X \to \R$ defined for all $x \in \X$ such that:\vspace{-0.5mm}
\begin{equation}
    g(x) \coloneqq \delta \G \left(p^{\pi'}_1 \right) (x) \quad \text{ and } \quad \EV_{x \sim p^\pi}[g(x)] = \langle \delta \G \left( p^{\pi'}_1 \right), p_1^{\pi} \rangle  \label{eq:g_from_functional_linearized}
\end{equation}
\looseness -1 As a consequence, by rewriting Eq. \eqref{eq:opt_first_variation} expressing the first term via an expectation as shown in Eq. \eqref{eq:g_from_functional_linearized}, it corresponds to a common Linear GO problem (see Eq. \eqref{eq:linear_fine_tuning}), which can be optimized by utilizing established entropy-regularized control methods~\citep[\eg][]{uehara2024feedback, domingo2024adjoint, zhao2024scores}.

\begin{algorithm}[t] 
    \small
    \caption{\AlgNameDef} 
    \label{alg:fdc_algorithm}
        \begin{algorithmic}[1]
        \State{\textbf{input: } $\G: $ general utility functional, $K: $ number of iterations, $\pi^{pre}: $ pre-trained flow generative model, $\{\eta_k \}_{k=1}^{K}$ regularization coefficients}
        \State{\textbf{Init:} $\pi_0 \coloneqq \pi^{pre} $}
        \For{$k=1, 2, \hdots, K$}
        \State{Estimate: $\nabla_x g_{k} = \nabla_x \delta \G (p^{k-1}_1)$}
        \State{Compute $\pi_k$ via first-order linear fine-tuning:\vspace{-1mm}
            \begin{equation*}
                \pi_k \leftarrow \LinearFineTuningSolver(\nabla_x g_k, \eta_k, \pi_{k-1}) \vspace{-5pt}
            \end{equation*}
        }
        \EndFor
        \State{\textbf{output: } policy $\pi \coloneqq \pi_{K}$} 
        \end{algorithmic}
\end{algorithm}
\looseness -1  We can finally present \AlgNameDef, see Alg. \ref{alg:fdc_algorithm}, a mirror descent (MD) scheme~\citep{nemirovskij1983problem} that reduces optimization of non-linear functionals $\G$ to a specific sequence of Linear GO problems. \AlgNameShort takes three inputs: a pre-trained flow or diffusion model $\pi^{pre}$, the number of iterations $K$, and a sequence of regularization weights $\{\eta_k\}_{k=1}^K$. At each iteration, \AlgNameShort first estimates the gradient of the functional first variation at the previous policy $\pi_{k-1}$, \ie $\nabla_x \delta \G \left( p^{k-1}_1 \right)$ (line 4). Then, it updates the flow model $\pi_k$ by solving the fine-tuning problem in Eq. \eqref{eq:opt_first_variation} via an entropy-regularized control solver such as Adjoint Matching \citep{domingo2024adjoint}, using $\nabla_x g_k \coloneqq \nabla_x \delta \G \left( p^{k-1}_1 \right)$ as in Eq. \eqref{eq:g_from_functional_linearized} (line 5). Ultimately, it returns a final policy $\pi \coloneqq \pi_K$.
We report a detailed implementation of \AlgNameShort in Apx. \ref{sec:alg_implementation}.

\mypar{Gradient of first variation: computation and estimation}
\looseness -1 Surprisingly, estimating $\nabla_x g_k$ in Alg. \ref{alg:fdc_algorithm} (line 4) rarely requires density estimation. Among the functionals in Table \ref{table:list_functionals}, only the Rényi divergence does, for which one can leverage the recent Itô density estimator~\citep{skreta2024superposition}.  All other functionals admit straightforward plug-in or sample-based approximations detailed in Apx. \ref{sec:app_functionals_details}. As an illustrative example, in the following we showcase three examples from Table \ref{table:list_functionals}:
\begin{equation*}
    \nabla_x \delta \Q(p^\pi)(x) = 
    \begin{cases}
        -\nabla_x \log p^\pi(x) & \text{Entropy ($\entropy$)}\\
        \nabla_x r(x) \cdot \mathbf{1} \{r(x) \leq q_\beta^r(p^\pi)\} & \text{CVaR} \\
        \nabla_x \phi^*(x) \text{ where } \phi^* = \argmax_{\phi: \|\nabla_x \phi\|\leq 1} \langle \phi, p^\pi - p^{pre}\rangle & \text{Wasserstein-1 ($W_1$)}
    \end{cases}
\end{equation*}
\looseness -1 Here $\Q$ denotes either a utility $\F$ or a divergence $\D$, and $q_\beta^r(p^\pi)$ is the $\beta$-quantile of $Z = r(X)$ with $X \sim p^\pi$~\citep{rockafellar2002conditional}.
These gradients can be easily implemented. For entropy, the score term can be approximated via the score network in the case of diffusion models~\citep{de2025provable}, and obtained via a known linear transformation of the learned velocity field in the case of flows~\citep[Eq.(8)]{domingo2024adjoint}. For CVaR, any standard sample-based estimator of $q_\beta^r(p^\pi)$~\citep{rockafellar2002conditional} can be used. For Wasserstein-1, $\phi^*$ actually corresponds to the discriminator in Wasserstein-GAN, which can be learned with established methods~\citep{arjovsky2017wassersteingan}. In Apx. \ref{sec:app_functionals_details}, we report the gradient of the first variation for all functionals in Table \ref{table:list_functionals}, explain their practical  estimation,  and present a tutorial to derive the first variation of any new functionals not mentioned within Table \ref{table:list_functionals}.

\looseness -1 Given the approximate gradient estimates and the generality of the objective functions, it is still unclear whether the proposed algorithm provably converges to the optimal flow model $\pi^*$. In the next section, we answer this question by developing a theoretical analysis via recent results on mirror flows~\citep{hsieh2019finding}.

\vspace{-2mm}
\section{Guarantees for Generative Optimization via \AlgNameLong} \vspace{-2mm}
\label{sec:theory}
In this section, we recast \eqref{eq:gen_opt_problem} as \emph{constrained} optimization over stochastic processes, where the constraint is given by the Continuity Equation \eqref{eq:continuity_equation}. This formulation enables the application of \textbf{mirror descent for constrained optimization} and the notion of \emph{relative smoothness}~\cite{aubin2022mirror}. In our framework, convergence speed is governed by:
\begin{enumerate*}
    \item the structural complexity of the functional $\G$ (cf.~\cref{sec:algorithm}),
    \item the accuracy of the estimator $g$ from \eqref{eq:g_from_functional_linearized}, and
    \item the quality of the oracle \LinearFineTuningSolver in Alg. \ref{alg:fdc_algorithm}.
\end{enumerate*} To handle these cases, we will analyze two representative regimes: \vspace{-6mm}
\begin{itemize}
    \item \textbf{Idealized.} $\G$ is \emph{concave}, and both $g$ and \LinearFineTuningSolver are exact. In this setting, classical results yield sharp step-size prescriptions and fast convergence rates.
    \item \textbf{General.} $\G$ is  \emph{non-concave}, with $g$ and the oracle subject to noise and bias. While fast convergence is generally out of reach~\cite{mertikopoulos2024unified, karimi2024sinkhorn}, convergence to a stationary point remains attainable under mild assumptions.\vspace{-3.5mm}
\end{itemize}

\para{Theoretical analysis: Idealized setting}

We now present a framework leading to convergence guarantees for \AlgNameShort (\ie Alg. \ref{alg:fdc_algorithm}) for \emph{concave} functionals $\G \in \mF(\X)$. We start by recalling the notion of Bregman divergence induced by a functional $\Q \in \mF(\X)$ between densities $\mu, \nu \in \mP(\X)$, namely:\vspace{-1mm}
\begin{equation*}
    D_\Q(\mu \, \| \, \nu) \coloneqq \Q(\mu) - \Q(\nu) - \langle \delta \Q(\nu), \mu - \nu \rangle 
\end{equation*}
\looseness -1 Next, we introduce two structural properties for our analysis.
\begin{restatable}[Relative smoothness and relative strong concavity \citep{lu2018relatively}]{definition}{relativeProperties}
\label{definition:relative_properties}
Let $\G: \mP(\X) \to \R$ a concave functional. We say that $\G$ is L-smooth relative to $\Q \in \mF(\X)$ over $\mP(\X)$ if $\exists$ $L$ scalar s.t. for all $\mu, \nu \in \mP(\X)$:\vspace{-2mm}
\begin{equation}
    \G(\nu) \geq \G(\mu) + \langle \delta \G(\mu), \nu - \mu \rangle - LD_\Q(\nu \, \| \, \mu)
\end{equation}
and we say that $\G$ is l-strongly concave relative to $\Q \in \mF(\X)$ over $\mP(\X)$ if $\exists$ $l \geq 0$ scalar s.t. for all $\mu, \nu \in \mP(\X)$:\vspace{-1mm}
\begin{equation}
    \G(\nu) \leq \G(\mu) + \langle \delta \G(\mu), \nu - \mu \rangle - lD_\Q(\nu \, \| \, \mu)
\end{equation}
\end{restatable}

\looseness -1 In the following, we interpret line $(6)$ of \AlgNameShort as a step of mirror ascent~\citep{nemirovskij1983problem}, and the KL divergence term as the Bregman divergence induced by an entropic mirror map $\Q = \entropy$, \ie $D_{KL}(\mu, \nu) = D_\entropy(\mu \, \| \, \nu)$.
We can finally state the following set of  assumptions as well as the convergence guarantee for an arbitrary functional $\G(\cdot) = \F(\cdot) - \alpha \D(\cdot \, \| \, p^{pre}) \in \mF(\X)$.

\begin{restatable}[Exact estimation and optimization]{assumption}{allAssumptions}
\label{assumption:all_assumptions}
We consider the following assumptions:\vspace{-1mm}
\begin{enumerate}
    \item Exact estimation: $\nabla_x \delta \G(p^k_1)$ is estimated exactly $\forall k \in [K]$. \vspace{-1mm}
    \item The optimization problem in Eq. \eqref{eq:opt_first_variation} is solved exactly.\vspace{-1mm}
\end{enumerate}
\end{restatable}
\vspace{-1.3mm}
\begin{tcolorbox}[colframe=white!, top=2pt,left=2pt,right=2pt,bottom=2pt]
\begin{restatable}[Convergence guarantee of \AlgNameLong with concave functionals]{theorem}{convexCaseConvergence}
\label{theorem:convex_case_convergence}
Given Assumptions \ref{assumption:all_assumptions}, fine-tuning a pre-trained model $\pi^{pre}$ via \AlgNameShort (Algorithm \ref{alg:fdc_algorithm}) with $\eta_k = L$ $\forall k \in [K]$, leads to a policy $\pi$ inducing a marginal distribution $p^{\pi}_1$ such that:
\begin{equation}
    \G(p_1^*) - \G(p^{\pi}_1) \leq \frac{L-l}{K}D_{KL}(p_1^* \, \| \, p_1^{pre})
\end{equation}
where $p_1^* \coloneqq p_1^{\pi^*}$ is the marginal distribution induced by the optimal policy $\pi^* \in \argmax_{\pi} \G(p^{\pi}_1) \coloneqq \F(p^{\pi}_1) - \alpha \D(p^{\pi}_1 \, \| \, p^{pre}_1)$.
\end{restatable}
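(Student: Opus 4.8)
The plan is to recognize Algorithm \ref{alg:fdc_algorithm} under Assumption \ref{assumption:all_assumptions} as exact mirror ascent on the functional $\G$ over $\mP(\X)$, with the entropic mirror map $\Q = \entropy$ and constant step-size $1/\eta_k = 1/L$, and then invoke the standard relative-smoothness convergence analysis of mirror descent \citep{lu2018relatively}. First I would make the reduction precise: by Eq. \eqref{eq:g_from_functional_linearized}, line $(6)$ of \AlgNameShort solves
\begin{equation*}
    p_1^k \in \argmax_{p \in \mP(\X)} \; \langle \delta\G(p_1^{k-1}), p \rangle - \eta_k D_\entropy(p \,\|\, p_1^{k-1}),
\end{equation*}
which is exactly the mirror-ascent update $p_1^k = \nabla\entropy^*\big(\nabla\entropy(p_1^{k-1}) + \tfrac{1}{\eta_k}\delta\G(p_1^{k-1})\big)$ in the space of measures; here I use that exact estimation (Assumption \ref{assumption:all_assumptions}.1) gives the true first variation and exact optimization (Assumption \ref{assumption:all_assumptions}.2) gives the true argmax, so there is no noise or bias to track.

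Next I would establish the one-step descent inequality. The optimality condition for the update gives $\langle \delta\G(p_1^{k-1}) - \eta_k(\delta\entropy(p_1^k) - \delta\entropy(p_1^{k-1})), p - p_1^k\rangle \le 0$ for all $p \in \mP(\X)$; combined with the three-point identity for Bregman divergences, $\langle \delta\entropy(p_1^k) - \delta\entropy(p_1^{k-1}), p - p_1^k\rangle = D_\entropy(p \,\|\, p_1^{k-1}) - D_\entropy(p \,\|\, p_1^k) - D_\entropy(p_1^k \,\|\, p_1^{k-1})$, this yields $\langle \delta\G(p_1^{k-1}), p - p_1^k\rangle \le \eta_k\big(D_\entropy(p\,\|\,p_1^{k-1}) - D_\entropy(p\,\|\,p_1^k) - D_\entropy(p_1^k\,\|\,p_1^{k-1})\big)$. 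Now apply $L$-smoothness of $\G$ relative to $\entropy$ (Definition \ref{definition:relative_properties}) at $\mu = p_1^{k-1}, \nu = p_1^k$ to get $\G(p_1^k) \ge \G(p_1^{k-1}) + \langle\delta\G(p_1^{k-1}), p_1^k - p_1^{k-1}\rangle - L D_\entropy(p_1^k\,\|\,p_1^{k-1})$, and apply $l$-strong concavity at $\mu = p_1^{k-1}, \nu = p^* := p_1^*$ to bound $\langle\delta\G(p_1^{k-1}), p^* - p_1^{k-1}\rangle \ge \G(p^*) - \G(p_1^{k-1}) + l D_\entropy(p^*\,\|\,p_1^{k-1})$. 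Taking $p = p^*$ in the optimality inequality, choosing $\eta_k = L$ so the $-L D_\entropy(p_1^k\,\|\,p_1^{k-1})$ terms cancel, and chaining these three relations gives the per-step bound
\begin{equation*}
    \G(p^*) - \G(p_1^k) \le (L - l)\, D_\entropy(p^* \,\|\, p_1^{k-1}) - L\, D_\entropy(p^* \,\|\, p_1^k).
\end{equation*}

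Finally I would telescope. Since $L - l \le L$, the right-hand side is dominated by a telescoping sum: summing over $k = 1, \dots, K$ gives $\sum_{k=1}^K (\G(p^*) - \G(p_1^k)) \le (L-l) D_\entropy(p^* \,\|\, p_1^0) = (L-l) D_{KL}(p^* \,\|\, p_1^{pre})$, using $p_1^0 = p_1^{\pi^{pre}}$ and the identification $D_\entropy = D_{KL}$. Because mirror ascent with $\eta_k = L$ makes $\{\G(p_1^k)\}$ nondecreasing (the update maximizes a lower bound on $\G$ by relative smoothness), the last iterate satisfies $\G(p^*) - \G(p_1^K) \le \frac{1}{K}\sum_{k=1}^K(\G(p^*) - \G(p_1^k)) \le \frac{L-l}{K} D_{KL}(p^* \,\|\, p_1^{pre})$, which is the claim with $p_1^\pi = p_1^K$. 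The main obstacle I anticipate is not the algebra but the functional-analytic bookkeeping: one must ensure the first variation $\delta\entropy$ is well-defined along the iterates (i.e. all $p_1^k$ have densities and finite entropy), that the constrained argmax over $\mP(\X)$ is attained and characterized by the stated optimality condition (a subtlety since $\mP(\X)$ is a simplex-type constraint, not a linear space), and that the Continuity-Equation constraint relating $\pi$ to $p_1^\pi$ does not obstruct realizing the mirror-ascent iterate — this last point is presumably where the "realistic setting" later in the paper does more work, but in the idealized setting the exact-oracle assumption sidesteps it.
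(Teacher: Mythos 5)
Your proposal is correct and takes essentially the same route as the paper's proof: both reduce line 5 of \AlgNameShort under Assumption \ref{assumption:all_assumptions} to exact entropic mirror ascent and derive the identical per-step inequality $\G(p_1^*) - \G(p_1^k) \leq (L-l)\,D_{KL}(p_1^*\,\|\,p_1^{k-1}) - L\,D_{KL}(p_1^*\,\|\,p_1^k)$ from relative smoothness, relative strong concavity, and the three-point inequality of \citep{lu2018relatively}. The only difference is cosmetic: you telescope with unit weights (using $L-l \leq L$ and monotonicity of $\G(p_1^k)$) to get the $1/K$ bound directly, whereas the paper aggregates with geometric weights $(L/(L-l))^k$ to obtain a linear rate and then weakens it via $(1+\tfrac{l}{L-l})^K \geq 1 + \tfrac{Kl}{L-l}$ to the stated bound.
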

\end{tcolorbox}
\vspace{-1.3mm}
\looseness -1 \cref{theorem:convex_case_convergence} provides a fast convergence rate under a specific step-size choice ($\eta_k = L$). However, it critically depends on \cref{assumption:all_assumptions}, which typically does not hold in practice. To address this limitation, we now consider a more general scenario where this key assumption is relaxed.
\vspace{-2mm}
\para{Theoretical analysis: General setting}
Recall that $p_1^k \coloneqq p^{\pi_k}_1$ represents the (stochastic) density produced by the \LinearFineTuningSolver oracle at the \(k\)-th step of \AlgNameShort, and consider the following \emph{mirror ascent} iterates, where $1 / \lambda_k = \eta_k$ in Algorithm \ref{alg:fdc_algorithm}:\vspace{-2mm}
\begin{equation}
\label{eq:MD}
\tag{MD$_k$}
    \pMD^{k} \coloneqq \argmax_{p \in \mathbb{P}(\Omega_{pre})} \quad \langle \delta \G \left(  p_T^{\pi_{k-1}} \right), p \rangle - \frac{1}{\step_k} D_{KL}(p \, \| \,  p_T^{\pi_{k-1}}) 
\end{equation}  
\looseness -1 In realistic settings, where only noisy \emph{and} biased approximations of \eqref{eq:MD} are available, it is essential to quantify the deviations from the idealized iterates in \eqref{eq:MD}. 
To this end, denote by $\hist_k$ the filtration up to step $k$, and consider the decomposition of the oracle into its \emph{noise} and \emph{bias} parts: 
\begin{align}
    \bias_k \coloneqq \EV \left[ \delta \G(p^{\pi_k}_T) -   \delta \G(\pMD^k)  \,  \vert\, \hist_k \right], \qquad 
    \noise_k \coloneqq\delta \G(p^{\pi_k}_T) -   \delta \G(\pMD^k)  -  \bias_k \vspace{-7mm}
\end{align}

Conditioned on \(\hist_k\), \(\noise_k\) has zero mean, while \(\bias_k\) captures the \emph{systematic} error. We then impose:
\begin{assumption}[Noise and Bias]
\label{asm:approximate}
The following events happen almost surely:
\begin{align}
    \norm{\bias_k}_\infty \rightarrow 0, \qquad
    \sum_k \EV \left[ \step_k^2 \left( \norm{\bias_k}_\infty^2 + \norm{\noise_k}_\infty^2 \right) \right] <\infty,\qquad
    \label{eq:bias-step}
    \sum_k{\step_k\norm{\bias_k}_\infty} < \infty
\end{align}
\end{assumption}\vspace{-3mm}
\looseness -1 The first condition is a \emph{necessary} requirement for convergence since when violated, it is easy to construct scenarios where no practical algorithm can solve the generative optimization problem. The second and third inequalities manage the trade-off between \emph{accuracy} of the approximate oracle \LinearFineTuningSolver{} and \emph{aggressiveness} of the step sizes, $\step_k$. Intuitively, lower noise and bias in the oracle enable the use of larger step sizes. To this end, \cref{asm:approximate} provides a concrete criterion that guarantees the success of finding the optimal policy with probability one.
\vspace{-1.3mm}
\begin{tcolorbox}[colframe=white!, top=2pt,left=2pt,right=2pt,bottom=2pt]
\begin{restatable}[Convergence guarantee of \AlgNameLong for general functionals]{theorem}{generalCaseConvergence}
\label{theorem:general_case_convergence}
\looseness -1 Given the Robbins-Monro step-size rule: $\sum_k \step_k = \infty, \sum_k \step_k^2 < \infty$, under \cref{asm:approximate} and technical assumptions (see \cref{sec:app-theory2}), the sequence of marginal densities $p^k_1$ induced by the iterates $\pi_k$ of \cref{alg:fdc_algorithm} 
converges weakly to a stationary point  $\tilde{p_1}$ of $\G$ almost surely, formally:  $p_1^k \rightharpoonup \tilde{p_1} \quad \text{a.s.}$.
\end{restatable}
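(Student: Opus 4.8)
The plan is to recast \AlgNameShort{} as a \emph{stochastic mirror ascent} scheme on the measure space $(\mP(\Omega_{pre}), D_{KL})$ and to study its long‑run behaviour through its continuous‑time mean field, the \emph{entropic mirror flow} of $\G$, i.e.\ the Fisher--Rao / replicator gradient flow $\dot p_t = p_t\,(\delta\G(p_t) - \langle p_t, \delta\G(p_t)\rangle)$, whose rest points are exactly the first‑order stationary points of $\max_{p\in\mP(\Omega_{pre})}\G(p)$ over the admissible support. I would then show that the continuous‑time interpolation of $\{p_1^k\}$ is an \emph{asymptotic pseudotrajectory} (APT) of this flow --- in the sense of Benaïm, adapted to the weak topology on measures following \citep{hsieh2019finding} --- invoke the APT limit‑set theorem to confine its accumulation points to an internally chain‑transitive set, and finally pin down a single weak limit via a Lyapunov argument together with the technical non‑degeneracy conditions of \cref{sec:app-theory2}.

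\emph{Reduction to a perturbed mirror‑ascent recursion.} First I would write \eqref{eq:MD} in closed form: with the entropic mirror map $\Q=\entropy$, first‑order optimality gives the Gibbs reweighting $\pMD^{k}\propto p_1^{\pi_{k-1}}\exp(\step_k\,\delta\G(p_1^{\pi_{k-1}}))$; expanding in $\step_k$ and inserting the decomposition $\delta\G(p_1^{k}) - \delta\G(\pMD^{k}) = \bias_k + \noise_k$ of \cref{asm:approximate}, the iterate actually returned by \LinearFineTuningSolver{} satisfies, up to normalization, $\log p_1^{k} = \log p_1^{k-1} + \step_k(\delta\G(p_1^{k-1}) + \bias_k + \noise_k) + o(\step_k)$, i.e.\ one mirror‑ascent step corrupted by a martingale‑difference term $\noise_k$ (zero mean given $\hist_k$) and a systematic term $\bias_k$. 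The summability $\sum_k \EV[\step_k^2(\norm{\bias_k}_\infty^2 + \norm{\noise_k}_\infty^2)] < \infty$ makes $\sum_{j\le k}\step_j\noise_j$ an $L^2$‑bounded martingale, hence almost surely convergent, so its tail increments vanish; and $\norm{\bias_k}_\infty \to 0$ with $\sum_k \step_k\norm{\bias_k}_\infty < \infty$ makes the accumulated bias a finite, rate‑negligible perturbation. Together with Robbins--Monro ($\sum_k\step_k = \infty$, $\sum_k\step_k^2 < \infty$), these are precisely the hypotheses under which the interpolated process tracks the mean‑field mirror flow, i.e.\ is an APT.

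\emph{Limit sets and single‑point convergence.} Tightness of $\mP(\Omega_{pre})$ (Prokhorov, using that $\Omega_{pre}$ is the fixed support of $p^{pre}$) yields precompactness of the interpolated trajectory, so the APT limit‑set theorem applies and every accumulation point lies in an internally chain‑transitive set of the flow. Now $\G$ is a strict Lyapunov functional: along the mirror flow $\tfrac{\der}{\der t}\G(p_t) = \Var_{p_t}\!\big(\delta\G(p_t)\big) \ge 0$, with equality iff $\delta\G(p_t)$ is $p_t$‑a.s.\ constant on $\Omega_{pre}$, which is exactly stationarity (this monotonicity is the continuous‑time counterpart of the relative‑smoothness mirror‑descent estimate of \citep{aubin2022mirror}). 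Hence any internally chain‑transitive set on which $\G$ is constant collapses to a connected set of stationary points, so all accumulation points of $\{p_1^k\}$ are stationary and share a common value of $\G$. Under the technical assumptions of \cref{sec:app-theory2} --- a Sard/\L{}ojasiewicz‑type condition ensuring $\G$ separates distinct connected components of its stationary set --- the sequence cannot oscillate between components, which upgrades set‑convergence to $p_1^k \rightharpoonup \tilde p_1$ a.s.\ for a single stationary point $\tilde p_1$; weak convergence is the natural mode here because it is the topology in which $\mP(\Omega_{pre})$ is compact and the map $p\mapsto\delta\G(p)$ is continuous.

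\emph{Main obstacle.} The crux is transplanting finite‑dimensional stochastic‑approximation machinery to the infinite‑dimensional measure space: one must establish well‑posedness and continuity of the mean‑field mirror flow as a semiflow on $\mP(\Omega_{pre})$, verify precompactness of the interpolated iterates, and obtain \emph{uniform‑in‑$k$} control of the perturbations in the relevant topology --- which is precisely why \cref{asm:approximate} is phrased in the $\norm{\cdot}_\infty$ norm on first variations --- so that Benaïm's APT and internally‑chain‑transitive‑set theorems can be invoked in the form used by \citep{hsieh2019finding}. A secondary difficulty is that, since $\G$ is non‑concave, its stationary set may be a non‑trivial continuum; pinning down a \emph{single} weak limit genuinely requires the non‑degeneracy hypotheses deferred to \cref{sec:app-theory2}, and without them one can only assert almost‑sure convergence to the set of stationary points of $\G$.
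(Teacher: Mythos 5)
Your proposal is correct and follows essentially the same route as the paper's proof: interpret the iterates as a noisy mirror-ascent recursion in the (dual) log-density variables, show the interpolated process is a Benaïm-style asymptotic pseudotrajectory of the entropic mirror flow under the noise/bias summability and Robbins--Monro conditions, confine limit points to internally chain-transitive sets, and conclude via $\G$ as a strict Lyapunov function plus non-degeneracy of the equilibrium set. The only notable differences are presentational: you write the mean field in primal (Fisher--Rao/replicator) form and argue precompactness via Prokhorov tightness with a \L{}ojasiewicz-type separation condition, whereas the paper works with the dual flow and simply assumes $L_\infty$-precompactness of the dual iterates (its Assumption C.2) and countability of equilibria --- your own "main obstacle" paragraph correctly identifies that the sup-norm control is what must be assumed or established there.
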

\end{tcolorbox}
\vspace{-1.3mm}

\vspace{-2.5mm}
\section{Experimental Evaluation} \vspace{-3mm}
\label{sec:experiments}
\looseness -1 We analyze the ability of \AlgNameDef $ $ to induce policies optimizing complex non-linear objectives, and compare its performance with Adjoint Matching (\AlgNameShortAM)~\citep{domingo2024adjoint}, a classic fine-tuning method. We present two types of experiments:  (i) Illustrative settings to provide insights via visual interpretability, and (ii) High-dimensional real-world applications, namely (a) novelty-seeking molecular design for single-point energy minimization~\citep{ friede2024dxtb}, and (b)  manifold exploration for text-to-image \emph{creative bridge design} generation. Additional details are provided in Apx. \ref{sec:experimental_detail}.

\begingroup
  \captionsetup[subfigure]{aboveskip=1.7pt, belowskip=0pt}
\newlength{\imgw}
\setlength{\imgw}{0.25\textwidth}
\begin{figure*}[t]
    \centering
    \begin{subfigure}{\imgw}
      \centering
      \includegraphics[width=\textwidth]{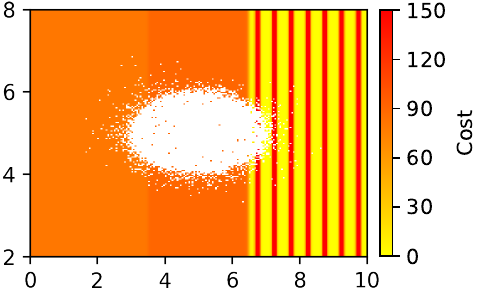}
      \caption{Pre-trained samples}
      \label{fig:toy_top_a}
    \end{subfigure}%
    \begin{subfigure}{\imgw}
      \centering
      \includegraphics[width=\textwidth]{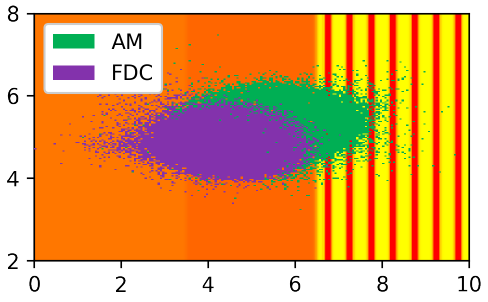}
      \caption{\AlgNameShortAM vs \AlgNameShort samples}
      \label{fig:toy_top_b}
    \end{subfigure}%
    \begin{subfigure}{\imgw}
      \centering
      \includegraphics[width=\textwidth]{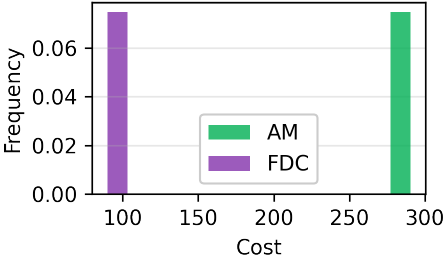}
      \caption{$\beta$-worst-case costs}
      \label{fig:toy_top_c}
    \end{subfigure}%
    \begin{subfigure}{\imgw}
      \centering
      \includegraphics[width=\textwidth]{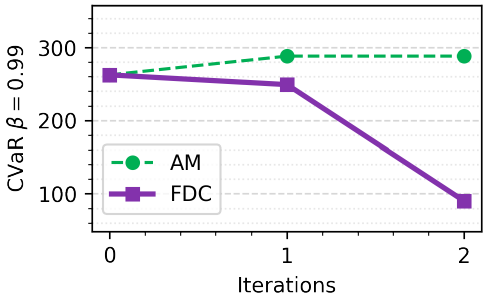}
      \caption{CVaR evaluation}
      \label{fig:toy_top_d}
    \end{subfigure}%
    \\[0.4em]
    \begin{subfigure}{\imgw}
      \centering
      \includegraphics[width=\textwidth]{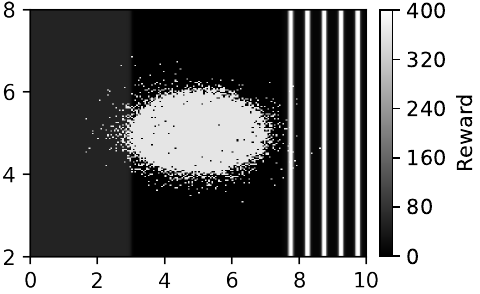}
      \caption{Pre-trained samples}
      \label{fig:toy_mid_a}
    \end{subfigure}%
    \begin{subfigure}{\imgw}
      \centering
      \includegraphics[width=\textwidth]{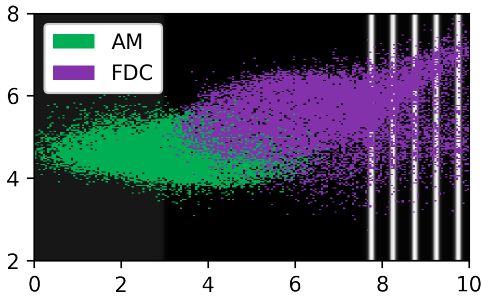}
      \caption{\AlgNameShortAM vs \AlgNameShort samples}
      \label{fig:toy_mid_b}
    \end{subfigure}%
    \begin{subfigure}{\imgw}
      \centering
      \includegraphics[width=\textwidth]{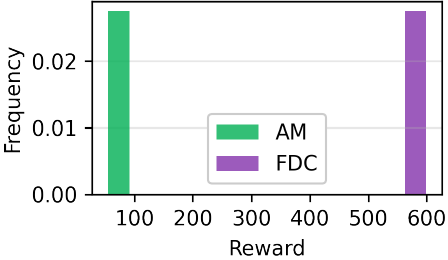}
      \caption{$\beta$-best-case rewards}
      \label{fig:toy_mid_c}
    \end{subfigure}%
    \begin{subfigure}{\imgw}
      \centering
      \includegraphics[width=\textwidth]{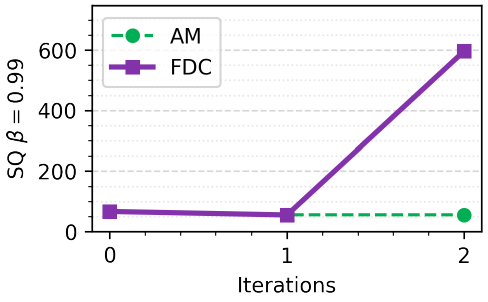}
      \caption{SQ evaluation}
      \label{fig:toy_mid_d}
    \end{subfigure}%
    \\[0.4em]
    \begin{subfigure}{\imgw}
      \centering
      \includegraphics[width=\textwidth]{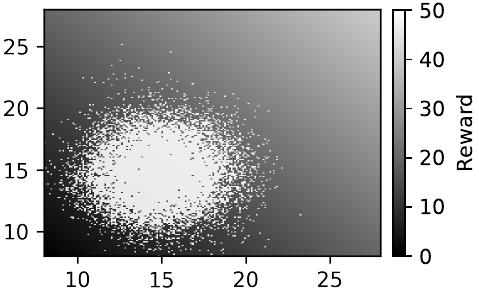}
      \caption{Pre-trained samples}
      \label{fig:toy_bottom_a}
    \end{subfigure}%
    \begin{subfigure}{\imgw}
      \centering
      \includegraphics[width=\textwidth]{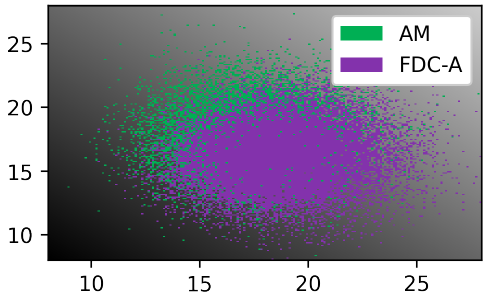}
      \caption{\AlgNameShortAM vs \AlgNameShort-A samples}
      \label{fig:toy_bottom_b}
    \end{subfigure}%
    \begin{subfigure}{\imgw}
      \centering
      \includegraphics[width=\textwidth]{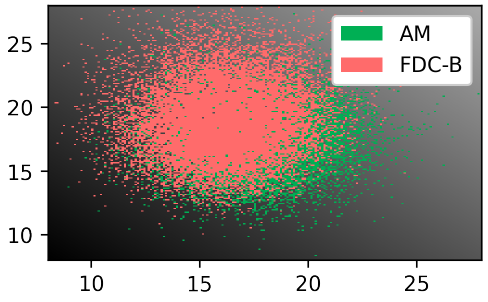}
      \caption{\AlgNameShortAM vs \AlgNameShort-B samples}
      \label{fig:toy_bottom_c}
    \end{subfigure}%
    \begin{subfigure}{\imgw}
      \centering
      \setlength{\tabcolsep}{3pt}    
      \renewcommand{\arraystretch}{0.85} 
      \begin{adjustbox}{valign=c}
      \raisebox{6.5ex}[0pt][0pt]{
      \resizebox{0.9\textwidth}{!}{%
        \begin{tabular}{@{}lccc@{}}
          \toprule
          & $\EV[r(x)]$ & $W_1^A$ & $\Delta \%$ \\
          \midrule
          Pre & $29.5$ & $0$ & $-$ \\
          \AlgNameShortAM & $35.0$ & $4.67$ & $100$ \\
          \AlgNameShort-A & $35.4$ & $1.95$ & $280$ \\
          \bottomrule
        \end{tabular}%
      }
      }
      \end{adjustbox}
      \caption{$W_1^A$ evaluation}
      \label{fig:toy_bottom_d}
    \end{subfigure}%
    \vspace{0pt}
    \caption{\looseness-1 Illustrative settings with visually interpretable results. (top) Risk-averse reward maximization for valid or safe generation, (mid) Novelty-seeking reward maximization for discovery, (bottom) Expected rewards maximization under optimal transport distance regularization. Crucially, \AlgNameShort can optimize well these complex objectives, while \AlgNameShortAM~\citep{domingo2024adjoint}, a classic fine-tuning scheme, fails at this.} \vspace{-6.5mm}
    \label{fig:experiments_fig_1}
\end{figure*}
\endgroup
\vspace{-0.5mm}
\mypar{Risk-averse reward maximization for better worst-case validity or safety}
\looseness -1 We fine-tune a pre-trained policy $\pi^{pre}$ (see Fig. \ref{fig:toy_top_a}) by optimizing the CVaR$_\beta$ utility \ie expected outcome in the $\beta$-worst-case (see Tab. \ref{table:list_functionals}) with KL regularization, and costs interpreted as negative rewards. The cost has three regions: a high-cost plateau (dark orange), where the initial density lies; a moderate-cost left area (light orange); and a predominantly low-cost right zone (yellow) punctuated by narrow, but catastrophic red-stripes. As shown in Fig. \ref{fig:toy_top_b}, \AlgNameShortAM moves the model density into the yellow region, lowering average cost but exposing it to rare extreme costs. In contrast, \AlgNameShort, run with $K=2$ iterations and $\beta=0.01$, successfully steers density into the safer, moderate-cost area, cutting the 1\%-worst-case cost from $288.2$ achieved by \AlgNameShortAM to $90.0$, well below the initial $262.5$, as shown in Fig. \ref{fig:toy_top_c} and \ref{fig:toy_top_d}. 
 
\vspace{-0.5mm}
\mypar{Novelty-seeking reward maximization for discovery}
\looseness -1 We fine‐tune a pre‐trained policy $\pi^{pre}$ to maximize the SQ$_\beta$ utility, \ie expected outcome in the $\beta$-best-case (see Tab. \ref{table:list_functionals}). The reward shown in Fig. \ref{fig:toy_mid_a} has a moderately high‐reward left area (light gray), a medium‐reward central plateau (darker gray) where the initial density lies, and a low-reward right region (black) with sparse, extreme‐reward spikes depicted by thin white lines. As shown in Fig. \ref{fig:toy_mid_b}, \AlgNameShortAM drifts the density into the safer left basin — improving the average reward but only reaching a best‐1\% expected reward of $55.5$, as shown in Fig. \ref{fig:toy_mid_c} and Fig. \ref{fig:toy_mid_d}. In contrast, \AlgNameShort, run for $K=2$ iterations and $\beta=0.99$, pushes the density rightwards, elevating the top‐1\% reward to $596.1$ (see Fig. \ref{fig:toy_mid_d}) — far above both \AlgNameShortAM and the initial $66.6$.

\vspace{-0.5mm}
\mypar{Reward maximization regularized via optimal transport distance}
\looseness -1 We fine‐tune the pre‐trained model with density in Fig. \ref{fig:toy_bottom_a} to maximize a reward function that increases moving top right. We consider two $W_{1}$ distances induced by two ground metrics: $d_{A}$, which makes vertical moves more costly than horizontal ones, and $d_{B}$, which does the opposite. Under $d_{A}$, both AM and the OT‐regularized model reach an expected reward of $35.0$, but FDC-A incurs only $W_{1}^{A}=1.95$ versus $4.67$ for AM, and achieves a mean shift that is $280 \%$ larger in the horizontal than in the vertical direction (Fig. \ref{fig:toy_bottom_b} and Tab. \ref{fig:toy_bottom_d}). By contrast, FDC-B under $d_{B}$ preferentially shifts the density upward (Fig. \ref{fig:toy_bottom_c}).

\vspace{-0.5mm}
\mypar{Conservative manifold exploration}
\looseness -1 We tackle manifold exploration~\citep{de2025provable} by fine-tuning a pre-trained model $\pi^{pre}$ to maximize the entropy utility ($\entropy$ in Tab. \ref{table:list_functionals}) under a KL regularization of strength $\alpha$, a capability not possible with prior methods~\citep{de2025provable}. As in previous work, we consider the common setting where the pre-trained model density $p^{pre}_1$  concentrates most of its mass in a specific region as shown in Fig. \ref{fig:real_top_a}, where $N=10000$ samples are shown. By fine-tuning $\pi^{pre}$ via \AlgNameShort, the density of the fine-tuned model shifts into low-coverage areas (see Fig. \ref{fig:real_top_b} and \ref{fig:real_top_c}). In particular, Fig. \ref{fig:real_top_d} demonstrates that reducing $\alpha$ from $0.5$ to $0.0$ yields progressively higher Monte Carlo entropy estimates ($7.00$ at $\alpha=0.5$, $7.14$ at $\alpha=0$), thus enabling control of the trade-off between preserving the original distribution and exploring novel regions, a capability not supported by prior methods~\citep{de2025provable}.

\begingroup
  \captionsetup[subfigure]{aboveskip=1.7pt, belowskip=0pt}
\setlength{\imgw}{0.25\textwidth}
\newlength{\sdw}
\setlength{\sdw}{0.45\imgw}
\begin{figure*}[t]
    \centering
    \begin{subfigure}{\imgw}
      \centering
      \includegraphics[width=\textwidth]{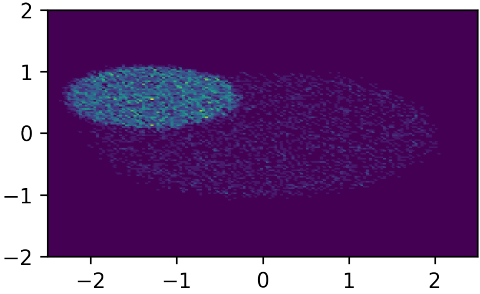}
      \caption{Pre-trained samples}
      \label{fig:real_top_a}
    \end{subfigure}%
    \begin{subfigure}{\imgw}
      \centering
      \includegraphics[width=\textwidth]{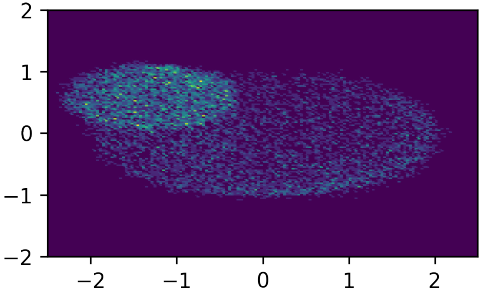}
      \caption{\AlgNameShort $\alpha=0.5$ samples}
      \label{fig:real_top_b}
    \end{subfigure}%
    \begin{subfigure}{\imgw}
      \centering
      \includegraphics[width=\textwidth]{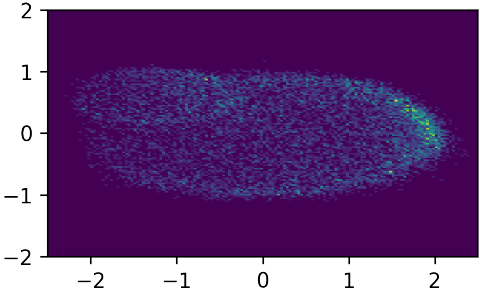}
      \caption{\AlgNameShort $\alpha=0.0$ samples}
      \label{fig:real_top_c}
    \end{subfigure}%
    \begin{subfigure}{\imgw}
      \centering
      \setlength{\tabcolsep}{3pt}    
      \renewcommand{\arraystretch}{0.85} 
      \begin{adjustbox}{valign=c}
      \raisebox{6.5ex}[0pt][0pt]{
      \resizebox{0.8\textwidth}{!}{%
        \begin{tabular}{@{}ll@{}}
            \toprule
               & $\entropy(p^\pi)$               \\
            \midrule
            Pre-trained    $\;$  & \; 6.78\\
            \AlgNameShort $\alpha = 0.5$ &\; 7.00\\
            \AlgNameShort $\alpha = 0.0$ &\; 7.14\\
            \bottomrule
            \end{tabular}
      }
      }
      \end{adjustbox}
      \caption{Entropy evaluation}
      \label{fig:real_top_d}
    \end{subfigure}%
    \\[0.5em] 
    \begin{subfigure}{3\sdw}
      \centering
      \includegraphics[width=.32\textwidth]{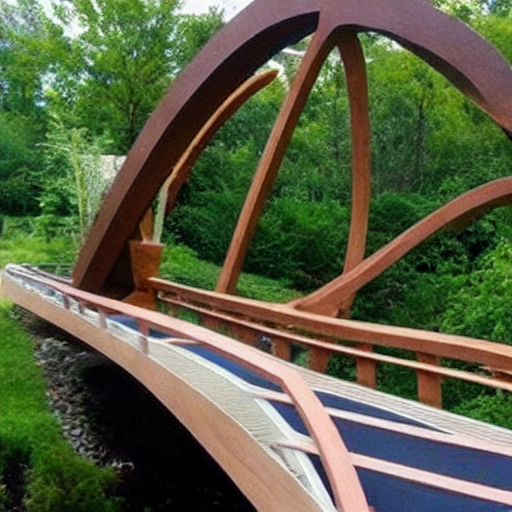}%
    \includegraphics[width=.32\textwidth]{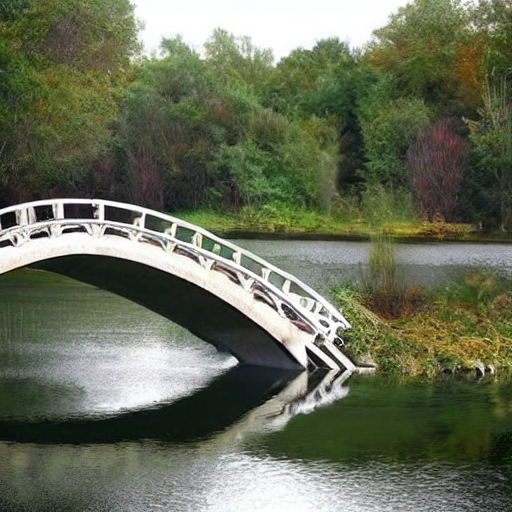}%
  \includegraphics[width=.32\textwidth]{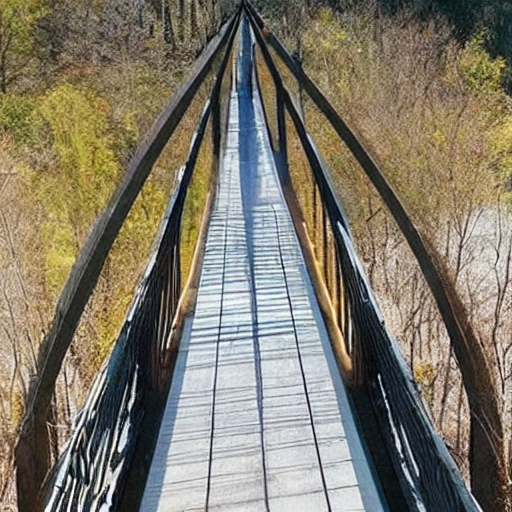}
      \caption{Pre-trained samples}
      \label{fig:real_mid_a}
    \end{subfigure}%
    \begin{subfigure}{0.5\sdw}
        \centering
        $\;\Rightarrow$
        \vspace{2.6em}
    \end{subfigure}
    \begin{subfigure}{3\sdw}
      \centering
      \includegraphics[width=.32\textwidth]{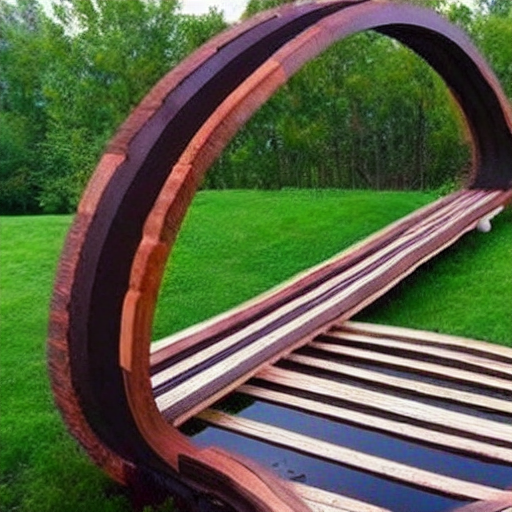}%
      \includegraphics[width=.32\textwidth]{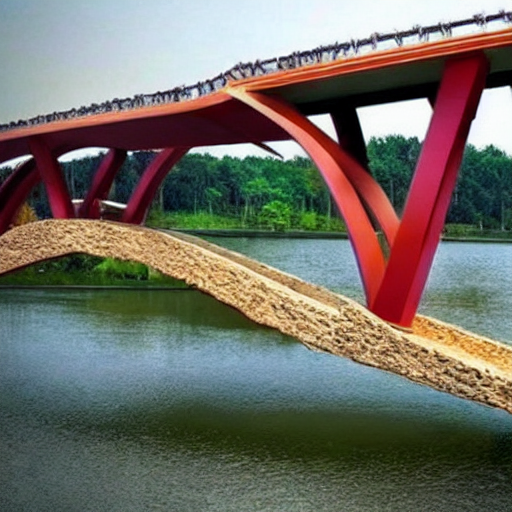}%
      \includegraphics[width=.32\textwidth]{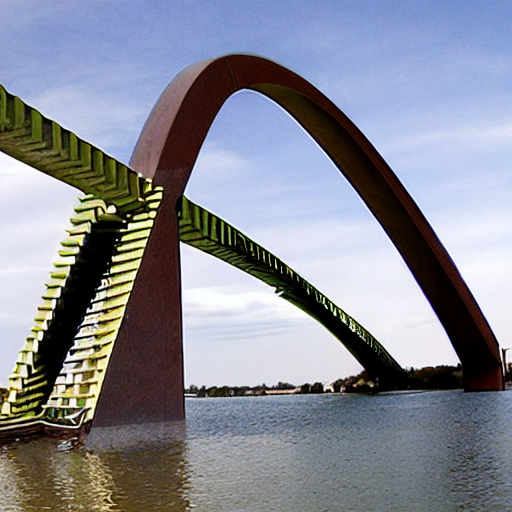}
      \caption{\AlgNameShort samples}
      \label{fig:real_mid_b}
    \end{subfigure}%
    \begin{subfigure}{\imgw}
      \centering
      \setlength{\tabcolsep}{3pt}    
      \renewcommand{\arraystretch}{0.85} 
      \begin{adjustbox}{valign=c}
      \raisebox{3.6ex}[0pt][0pt]{
      \resizebox{0.9\textwidth}{!}{%
        \begin{tabular}{@{}lll@{}}
            \toprule
               & Vendi  & CLIP         \\
            \midrule
            Pre-trained    $\;$  & \; $2.36$ & $0.19$ \\
            \AlgNameShort $\alpha = 0.001$ &\; $\mathbf{2.47}$ & $\mathbf{0.22}$\\
            \bottomrule
            \end{tabular}
      }
      }
      \end{adjustbox}
      \caption{Images evaluation}
      \label{tab-sd-vendi}
    \end{subfigure}%
    \\[0.5em]
    \begin{subfigure}{\imgw}
      \centering
      \includegraphics[width=\textwidth]{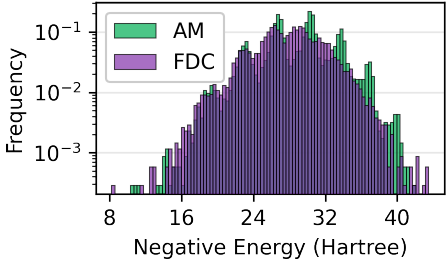}
      \caption{Energy distributions}
      \label{fig:real_bottom_a}
    \end{subfigure}%
    \begin{subfigure}{\imgw}
      \centering
      \includegraphics[width=\textwidth]{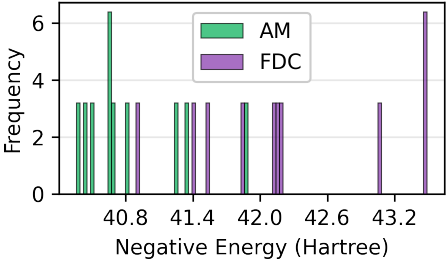}
      \caption{Top $0.2\%$ Energies}
      \label{fig:real_bottom_b}
    \end{subfigure}%
    \begin{subfigure}{\imgw}
      \centering
      \setlength{\tabcolsep}{3pt}    
      \renewcommand{\arraystretch}{0.85} 
      \begin{adjustbox}{valign=c}
      \raisebox{7.0ex}[0pt][0pt]{
      \resizebox{0.85\textwidth}{!}{%
        \begin{tabular}{@{}lccc@{}}
          \toprule
          & $\EV[r(x)]$ & $SQ_{\beta}$ \\
          \midrule
          Pre & $15.4$ & $24.2$ \\
          \AlgNameShortAM & $29.1$ & $39.7$ \\
          \AlgNameShort-A & $27.5$ & $41.8$ \\
          \bottomrule
        \end{tabular}%
      }
      }
      \end{adjustbox}
      \caption{SQ Negative Energy}
      \label{fig:real_bottom_c}
    \end{subfigure}%
    \begin{subfigure}{\imgw}
      \centering
      \raisebox{0.35ex}[0pt][0pt]{
      \includegraphics[width=0.8\textwidth]{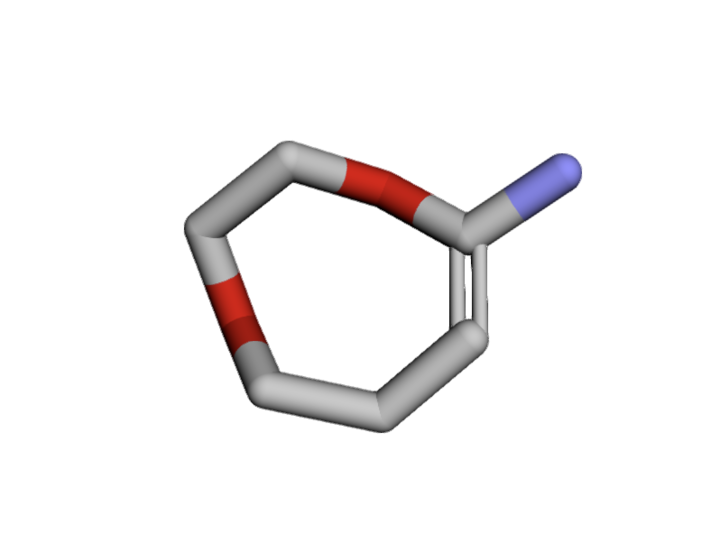}
      }
      \caption{\AlgNameShort -generated design}
      \label{fig:real_bottom_d}
    \end{subfigure}%
    \vspace{-0mm}
    \caption{\looseness-1 (top) Illustrative manifold exploration experiment via KL-regularized entropy maximization, (mid) High-dimensional manifold exploration via text-to-image model fine-tuning for prompt "A creative bridge design". Left: images from pre-trained model, Right: images from model fine-tuned via \AlgNameShort, with higher diversity as indicated by a higher Vendi score. (bottom) Novelty-seeking molecular design for Energy (kcal/mol) maximization by fine-tuning FlowMol~\citep{dunn2024mixed}. \AlgNameShort shows enhanced control capabilities for optimizing such complex objectives than \AlgNameShortAM, a classic fine-tuning scheme.}
    \label{fig:experiments_fig_2}
    \vspace{-6mm}
\end{figure*}
\endgroup
\vspace{-0.5mm}
\mypar{Molecular design for single-point energy minimization}
\looseness -1 We fine-tune FlowMol~\citep{dunn2024mixed}, pre-trained on QM9~\citep{ramakrishnan2014quantum}, to discover molecules minimizing the single-point total energy computed via extended tight-binding at the GFN1‐xTB level of theory~\citep{friede2024dxtb}. Concretely, we maximize the negative energy. We do not aim to maximize the average sample reward, but rather that of the top $0.2\%$ samples. We employ \AlgNameShort with novelty-seeking SQ utility (see Tab. \ref{table:list_functionals}) with $\beta = 0.998$, and make $2$ gradient steps per $K=10$ iterations. We compare it with \AlgNameShortAM run for $240$ steps. Fig. \ref{fig:real_bottom_c} shows that while \AlgNameShortAM generates better samples in average (namely $29.1$ over $27.5$ of \AlgNameShort), the average quality of the top $0.2\%$ molecules, indicated by SQ$_\beta$ is higher for \AlgNameShort than for \AlgNameShortAM (namely $41.8$ over $39.7$ of \AlgNameShortAM). This confirms (see Fig. \ref{fig:real_bottom_b} and \ref{fig:real_bottom_a}) that \AlgNameShort can sacrifice the average reward to generate a few truly high-reward designs.

\vspace{-0.5mm}
\mypar{Text-to-image bridge designs conservative exploration}
\looseness -1 We perform manifold exploration by fine-tuning Stable Diffusion (SD) $1.4$~\citep{rombach2021highresolution} with prompt "A creative bridge design.". To this end, we maximize the KL-regularized entropy (see Tab. \ref{table:list_functionals}) with $\alpha = 0.001$ via \AlgNameShort for $K=2$ steps.
As a diversity metric, we utilize the Vendi score~\citep{friedman2022vendi} with cosine similarity kernel on the extracted CLIP~\citep{hessel2021clipscore} features from a sample of $100$ images and compared it to the baseline pre-trained model in Fig. \ref{tab-sd-vendi}. Beyond increasing the Vendi score, \AlgNameShort also increases the CLIP score of the initial model.

\vspace{-2mm}
\section{Related Works} \vspace{-2mm}
\label{sec:related_works}
\mypar{Flow and diffusion models fine-tuning via optimal control}
\looseness -1 Recent works have framed fine-tuning of diffusion and flow models to maximize expected reward under KL regularization as an entropy-regularized optimal control problem~\citep[\eg][]{uehara2024fine, tang2024fine, uehara2024feedback,  domingo2024adjoint}. Crucially, as shown in Sec. \ref{sec:problem_setting}, the problem tackled by these studies is the specific sub-case of generative optimization (Eq. \eqref{eq:gen_opt_problem}), where the utility $\F$ is linear, and $\D = D_{KL}$. In this work, we propose a principled method with guarantees for the far more general class of non-linear utilities and divergences beyond KL, including the ones listed in Tab. \ref{table:list_functionals}. The framework introduced has strictly higher expressive power and control capabilities for fine-tuning generative model (see Sec. \ref{sec:problem_setting}). This renders possible to tackle relevant tasks \eg scientific discovery, beyond the capabilities of the aforementioned fine-tuning schemes.

\vspace{-1mm}
\mypar{Convex and General Utilities Reinforcement Learning}
\looseness -1 Convex and General (Utilities) RL~\citep{hazan2019maxent, zahavy2021reward, zhang2020variational} generalizes RL to the case where one wishes to maximize a concave~\citep{hazan2019maxent, zahavy2021reward}, or general~\citep{zhang2020variational, barakat2023reinforcement} functional of the state distribution induced by a policy over a dynamical system's state space. The introduced generative optimization problem (in Eq. \eqref{eq:gen_opt_problem}) is related, with $p^\pi_1$ representing the state distribution induced by policy $\pi$ over a subset of the state space. Recent works tackled the finite samples budget setting~\citep[\eg][]{mutti2022importance, mutti2022challenging, mutti2023convex, prajapat2023submodular, de2024global}. Ultimately, to our knowledge, this is the first work leveraging an algorithmic scheme resembling General RL for the practically relevant task of generative optimization of general non-linear functionals via fine-tuning of diffusion and flow models.

\vspace{-1mm}
\mypar{Optimization over probability measures via mirror flows}
\looseness -1 Recently, there has been a growing interest in building theoretical guarantees for optimization problems over spaces of probability measures in a variety of applications. These include GANs~\cite{hsieh2019finding}, optimal transport~\cite{aubin2022mirror, leger2021gradient, karimi2024sinkhorn}, kernelized methods~\cite{dvurechensky2024analysis}, and manifold exploration~\citep{de2025provable}. We present the first use of this framework to establish guarantees for the generative optimization problem in Eq. \eqref{eq:gen_opt_problem}. This novel link to probability-space optimization sheds new light on large-scale flow and diffusion models fine-tuning.

\vspace{-2.5mm}
\section{Conclusion} \vspace{-3mm}
\label{sec:conclusions}
\looseness -1 This work tackles the fundamental challenge of fine-tuning pre-trained flow and diffusion generative models on arbitrary task-specific utilities and divergences while retaining prior knowledge. We introduce a unified generative optimization framework that strictly generalizes existing formulations and propose a rich class of new practically relevant objectives. We then propose \AlgNameLong, a mirror-descent algorithm that reduces complex generative optimization to a sequence of standard fine-tuning steps, each solvable by scalable off-the-shelf methods. Leveraging convex analysis and recent advances in mirror flows theory, we prove convergence under general conditions. Empirical results on synthetic benchmarks, molecular design, and image generation, demonstrate that our approach can steer pre-trained models to optimize objectives beyond the reach of current fine-tuning techniques. As for limitations, while our framework is general, future work will need to assess to what extent the flexibility in selecting utilities and divergences yields concrete gains in specific applications.

\section*{Acknowledgements}
This publication was made possible by the ETH AI Center doctoral fellowship to Riccardo De Santi, and postdoctoral fellowship to Marin Vlastelica. The project has received funding from the Swiss
National Science Foundation under NCCR Catalysis grant number 180544 and NCCR Automation grant agreement 51NF40 180545.

\bibliography{biblio}
\bibliographystyle{plain}


\newpage
\appendix
\tableofcontents
\newpage
\addtocontents{toc}{\protect\setcounter{tocdepth}{2}}

\section{Functionals and Derivation of Gradients of First-order Variations}
\label{sec:app_functionals_details}
\subsection{Overview of utilities and divergences in Table \ref{table:list_functionals}}
In the following, we report the missing details for the functionals presented within Table \ref{table:list_functionals}, and discuss some possible applications.

\paragraph{Manifold Exploration and Generative Model De-biasing}
As mentioned within Sec. \ref{sec:problem_setting}, maximization of the entropy functional as been recently introduced as a principled objective for manifold exploration \citep{de2025provable}. Moreover, we wish to point out that it can be interpreted also from the viewpoint of de-biasing a prior generative model to re-distribute more uniformly its density while preserving a certain notion of support, \eg via sufficient KL-divergence regularization.

\paragraph{Risk-averse and Novelty-seeking reward maximization} A definition of $q_\beta^r$ can be found below, explanations of these utilities can be found in Sec. \ref{sec:introduction}, and experimental illustrative examples are provided in Sec. \ref{sec:experiments}.

\paragraph{Optimal Experiment Design} 
The task of Optimal Experimental Design (OED) \cite{chaloner1995bayesian} involves choosing a sequence of experiments so as to minimize some uncertainty metric for an unknown \emph{quantity of interest} $f: \X \to \R$, where $\X$ is the set of all possible experiments.  From a probabilistic standpoint, an optimal design may be viewed as a probability distribution over $\X$, prescribing how frequently each experiment should be performed to achieve maximal reduction in uncertainty about $f$ \cite{pukelsheim2006optimal}. This problem has been recently studied in the case where $f$ is an element of a reproducing kernel Hilbert space (RKHS), \ie  $f \in \mathcal{H}_k$, induced by a known kernel $k(x, x') = \Phi(x)^\top \Phi(x')$ where $x, x' \in \X$~\citep{mutny2024modern}. Given this setting, one might aim to acquire information about $f$ according to different \emph{criteria} captured by the scalarization function $s(\cdot)$~\citep{mutny2023active}. In particular, in Table \ref{table:list_functionals}, we report three illustrative choices for $s$:
\begin{itemize}
    \item D-design: $\log\det(\cdot)$ (Information)
    \item A-design: $- \mathrm{Tr}(\cdot)$ (Parameter error)
    \item E-design: $\lambda_{max}(\cdot)$ (Worst projection error)
\end{itemize}
as reported in previous work~\citep[Table 1][]{mutny2023active}.

\paragraph{Diverse Mode Discovery}
This objective corresponds to a re-interpretation of the Diverse Skill Discovery objective introduced in the context of Reinforcement Learning~\citep{zahavy2021reward}. Consider the case where it is given a discrete and finite set $\mathcal{S}$ of symbols interpretable as latent variables, which can be leveraged to (exactly or approximately) perform conditional generation. This objective captures the task of assuring maximal diversity, in terms of KL divergence between the different conditional components, represented as $p^{\pi, k}$ with $k \in \mathcal{S}$.

\paragraph{Log-barrier constrained generation}
This formulation can be found within the General Utilities RL literature~\citep{zhang2020variational}. In particular, here we show the case where constraints are enforced via a log-barrier function, namely $\log(\cdot)$. Nonetheless, the functional presented in Table \ref{table:list_functionals} remains meaningful for general penalty functions.

\paragraph{Optimal transport distances}
OT distances within Table \ref{table:list_functionals} and their relative notation are introduced below in the context of their first variation computation.

\paragraph{Maximum Mean Discrepancy}
Here $k$ denotes a positive-definite kernel, which measures similarity between two points in sample space. Moreover, $\mu_p$ denotes a kernel mean embedding of distribution $p$~\citep{muandet2017kernel}. In terms of applications, choosing a proper kernel $k$ could render possible to preserve specific structure of the initial pre-trained model that would be otherwise lost via KL regularization.

\subsection{A brief tutorial on first variation derivation}
In this work, we focus on the functionals that are Fréchet differentiable: Let $V$ be a normed spaces. Consider a functional $F:V \rightarrow \R$.
    There exists a linear operator $A : V \rightarrow \R$ such that the following limit holds
    \begin{equation} \label{eqn_frechet_derivative_def}
        \lim_{\|h\|_V\rightarrow 0} \frac{|F(f + h) - F(f) - A[h]|}{\|h\|_V} = 0.
    \end{equation}
    We further assume that $V$ admits certain structure such that every element in its dual space (the space of bounded linear operator on $V$) admits some compact representation. For example, when $V$ is the set of compact-supported continuous bounded functions, there exists a unique positive Borel measure $\mu$ with the same support, which can be identified as the linear functional.
    We denote this element as $\delta F[f]$ such that $\langle \delta F[f], h\rangle = A[h]$. Sometimes we also denote it as $\frac{\delta F}{\delta f}$. We will refer to $\delta F[f]$ as the first-order variation of $F$ at $f$.

In this section, we briefly review strategies for deriving the first-order variation of two broad classes of functionals: those defined in closed form with respect to the density (e.g., expectation and entropy) and those defined via variational formulations (e.g., CVaR, Wasserstein distance, and MMD).
\begin{itemize}[leftmargin=*]
    \item \textbf{Category 1: Functional defined in a closed form w.r.t. the density.} For this class of functionals, the first-order variations can typically be computed using its definition and chain rule.

    With definition (\ref{eqn_frechet_derivative_def}) in mind, we can try to calculate the first-order variation of the mean functional.
    Consider a continuous and bounded function $r: \R^d \rightarrow \R$ and a probability measure $\mu$ on $\R^d$. Consider the functional $F(\mu) = \int r(x) \mu(x) dx$. We have
    \begin{equation}
        |F(\mu + \delta \mu) - F(\mu) - \langle r,  \delta\mu \rangle| = 0.
    \end{equation}
    We therefore obtain $\delta F[\mu] = r$ for all $\mu$.
    We will compute the first-order variations for other functionals in the next subsection.    
    \item \textbf{Category 2: Functionals defined through a variational formulation.} Another important subclass of functionals considered in this paper is the ones defined via a variational problem
    \begin{equation}
        F[f] = \sup_{g \in \Omega} G[f, g],
    \end{equation}
    where $\Omega$ is a set of functions or vectors independent of the choice of $f$, and $g$ is optimized over the set $\Omega$. We will assume that the maximizer $g^*(f)$ that reaches the optimal value for $G[f, \cdot]$ is unique (which is the case for the functionals considered in this project).    
    It is known that one can use the Danskin's theorem (also known as the envelope theorem) to compute
    \begin{equation}
        \frac{\delta F[f]}{\delta f} = \partial_f G[f, g^*(f)],
    \end{equation}
    under the assumption that $F$ is differentiable \citep{milgrom2002envelope}.
\end{itemize}
\subsection{Derivation of gradients of first-order variation for functionals in Table \ref{table:list_functionals}}

\begin{table*}[t]
\setlength{\tabcolsep}{4pt}
\renewcommand{\arraystretch}{2.5}
\vspace{-1mm}
\begin{sc}
\resizebox{\textwidth}{!}{%
\begin{tabular}{c c c c c}
\toprule
\multirow{2}{*}{Application} &
\multirow{2}{*}{Functional $\F$ / $\D$} &
\multirow{2}{*}{\shortstack[c]{First-order Variation}} &
\multicolumn{2}{c}{Density Control} \\ \cmidrule(lr){4-5}
 & & & convex & general \\
\midrule
\rowcolor{lightblueII} Reward optimization~\citep{domingo2024adjoint, uehara2024fine} &
  $\mathbb{E}_{x \sim p^\pi}[r(x)]$ &
  $r$ & \tick & \tick \\
\rowcolor{lightblue}%
\multirow{2}{*}{\shortstack[c]{Manifold Exploration\\[0.1em]Gen.\ model de-biasing}} &
  \multirow{2}{*}{$\entropy(p^\pi) \coloneqq -\EV_{x \sim p^\pi}[\log p^\pi(x)]$} &
  \multirow{2}{*}{$-1-\log p^\pi$} & \multirow{2}{*}{\tick} & \multirow{2}{*}{\tick} \\[1.45em]
\rowcolor{lightblueII} \multirow{2}{*}{Risk-averse optimization} &
  $\mathrm{CVaR}^r_{\beta}(p^\pi) \coloneqq \EV_{x\sim p^\pi}[r(x) \mid r(x) \leq \mathrm{q}^r_{\beta}(p^\pi)]$ & $\beta \min\{r(x) - q^r_\beta(p^\pi), 0\}$ & \tick & \tick \\[0.6ex]
\rowcolor{lightblueII} &  $\mathbb{E}_{x \sim p^\pi}[r(x)] - \mathbb{V}\mathrm{ar}(p^\pi)$ & $r(x) - \left(r(x)^2 - 2\mathbb{E}_{x\sim p^\pi}[r(x)] r(x)\right)$ & \cross & \tick \\
\rowcolor{lightblue}%
Risk-seeking optimization &
  $\text{SQ}^r_{\beta}(p^\pi) \coloneqq \EV_{x\sim p^\pi}[r(x) \mid r(x) \geq q^r_\beta(p^\pi)]$ & $(1-\beta)\max\{r(x) - q^r_\beta(p^\pi), 0\}$ & \cross & \tick \\
\rowcolor{lightblueII} \multirow{2}{*}{Optimal Experiment Design} &
  $\mathrm{s}(\EV_{x\sim p^\pi}[\Phi(x)\Phi(x)^\top - \lambda \mathbb{I}])$ & 
  see \cref{eqn_first_order_variation_log_det} & \multirow{2}{*}{\tick} & \multirow{2}{*}{\tick} \\[0.6ex]
\rowcolor{lightblueII} &  $\mathrm{s}(\cdot) \in \{\log \det(\cdot), -\mathrm{Tr}(\cdot)^{-1}, -\lambda_{max}(\cdot) \}$  &  &  &  \\
\rowcolor{lightblue} \rule[-1.5ex]{0pt}{4.0ex}
Diverse modes discovery &
  $- \EV_z[D_{KL}(p^{\pi, z} \| \EV_k p^{\pi, k})]$ & See \cref{eqn_first_order_variation_mode_discovery} & \cross & \tick \\
\rowcolor{lightblueII} Log-Barrier Constrained Generation &
  $\mathbb{E}_{x \sim p^\pi}[r(x)] - \beta \log \left(\langle p^\pi, c \rangle - C \right)$ &
  See \Cref{eqn_first_order_variation_log_barrier} & \tick & \tick \\
\midrule
\rowcolor{lightorange} \rule[-1.5ex]{0pt}{4.0ex} Kullback–Leibler divergence &
  $D_{KL}(p^\pi \, \| \, p^{pre}) = \int p^\pi(x)\log \frac{p^\pi(x)}{p^{pre}(x)} \, dx$ & $1 + \log p^\pi -\log p^{pre}$ & \tick & \tick \\
\rowcolor{lightorangeII} \rule[-1.5ex]{0pt}{4.0ex} Rényi divergences &
  $D_\beta(p^\pi \, \| \, p^{pre}) \coloneqq \frac{1}{\beta -1}\log \int (p^\pi(x))^\beta (p^{pre}(x))^{1-\beta} \, dx$ &
  $\frac{\beta}{\beta-1} \left(\int \left(\frac{p}{q}\right)^{\beta} dq(x)\right)^{-1} \left(\frac{p}{q}\right)^{\beta-1}$ & \tick & \tick \\
\rowcolor{lightorange} \rule[-1.5ex]{0pt}{4.0ex} Optimal Transport distances &
  $W_p(p^\pi \, \| \, p^{pre}) \coloneqq \inf_{\gamma \in \Gamma(p^\pi, p^{pre})} \EV_{(x,y) \sim \gamma}[d(x,y)^p]^{\frac{1}{p}}$ &
  see \cref{eqn_first_order_variation_Wp} & \tick & \tick \\
\rowcolor{lightorangeII} \rule[-1.5ex]{0pt}{4.0ex} Maximum Mean Discrepancy &
  $\mathrm{MMD}_k(p^\pi, p^{pre}) \coloneqq  \| \mu_{p^\pi} - \mu_{p^{pre}}\|$, $\mu_p \coloneqq \EV_{x\sim p}[k(x, \cdot)]$ &
  $\argmax_{\phi \in \mathcal{H}} \langle\phi, p^\pi - p^{pre}\rangle$ & \tick & \tick \\
\bottomrule
\end{tabular}
}
\end{sc}
\caption{\looseness-1 Examples of practically relevant utilities $\F$ (blue) and divergences $\D$ (orange), and their first-order variations.}
\end{table*}
\begin{itemize}[leftmargin=*]
    \item \textbf{Risk-Averse Optimization (Category 2)}
    Recall that $q_\beta^r(p^\pi) = \sup\{v\in\R \vert F_Z(v) \leq \beta\}$, where the random variable $Z$ is defined as $Z = r(x)$ with $x\sim p^\pi(x)$.
    From \citep{rockafellar2000optimization}, we have
    \begin{equation*}
        \mathrm{CVaR}_\beta^r(p^\pi) = \mathbb{E}[r(x) \vert r(x) \leq q_\beta^r(p^\pi)] = \beta \inf_\zeta \left\{\zeta + \frac{1}{\beta}\mathbb{E}\left[\min\{r(x) - \zeta, 0 \}\right]\right\}.
    \end{equation*}
    Moreover, we have $\zeta^*$ that solves the above optimization problem is exactly $\zeta^* = q_\beta^r(p^\pi)$.
    By Danskin's theorem, one has (in a weak sense)
    \begin{equation}
        \frac{\delta \mathrm{CVaR}_\beta^r(p^\pi)}{\delta p^\pi} = \beta \min\{r(x) - q_\beta^r(p^\pi), 0\}.
    \end{equation}
    \item \textbf{Risk-Seeking Optimization (Category 2)}
    Recall that $q_\beta^r(p^\pi) = \sup\{v\in\R \vert F_Z(v) \leq \beta\}$, where the random variable $Z$ is defined as $Z = r(x)$ with $x\sim p^\pi(x)$.
    From \citep{rockafellar2000optimization}, we have
    \begin{equation*}
        \mathrm{SQ}_\beta^r(p^\pi) = \mathbb{E}[r(x) \vert r(x) \geq q_\beta^r(p^\pi)] = (1-\beta) \inf_\zeta \left\{\zeta + \frac{1}{1-\beta}\mathbb{E}\left[\max\{r(x)-\zeta, 0 \}\right]\right\}.
    \end{equation*}
    Moreover, we have $\zeta^*$ that solves the above optimization problem is exactly $\zeta^* = q_\beta^r(p^\pi)$.
    By Danskin's theorem, one has (in a weak sense)
    \begin{equation}
        \frac{\delta \mathrm{SQ}_\beta^r(p^\pi)}{\delta p^\pi} = (1-\beta)\max\{r(x) - q_\beta^r(p^\pi), 0\}.
    \end{equation}
    \item \textbf{Rényi Divergence (Category 1)} Recall the definition of Rényi Divergence
    \begin{equation}
        D_\beta(p \| q) = \frac{1}{\beta-1} \log \int \left(\frac{p}{q}\right)^{\beta} dq(x).
    \end{equation}
    We ignore higher-order terms like $O((\delta p)^2)$.
    \begin{align}
        D_\beta(p + \delta p \| q) - D_\beta(p \| q) =&\ \frac{1}{\beta-1} \log \frac{\int \left(\frac{p+\delta p}{q}\right)^{\beta} dq(x)}{\int \left(\frac{p}{q}\right)^{\beta} dq(x)} \\
        =&\ \frac{1}{\beta-1}  \log \frac{\int \left(\frac{p}{q}\right)^{\beta} + \beta \left(\frac{p}{q}\right)^{\beta-1}\frac{\delta p}{q}  dq(x)}{\int \left(\frac{p}{q}\right)^{\beta} dq(x)} \\
        =&\ \frac{1}{\beta-1} \log 1 + \frac{\int  \beta \left(\frac{p}{q}\right)^{\beta-1}\frac{\delta p}{q}  dq(x)}{\int \left(\frac{p}{q}\right)^{\beta} dq(x)} \\
        =&\ \frac{1}{\beta-1} \frac{\int  \beta \left(\frac{p}{q}\right)^{\beta-1}\frac{\delta p}{q}  dq(x)}{\int \left(\frac{p}{q}\right)^{\beta} dq(x)}
    \end{align}
    \begin{align}
        \frac{\delta}{\delta p} R_\beta(p, q) = \frac{\beta}{\beta-1} \left(\int \left(\frac{p}{q}\right)^{\beta} dq(x)\right)^{-1} \left(\frac{p}{q}\right)^{\beta-1}
    \end{align}
    \item \textbf{Optimal transport and Wasserstein-p distance (Category 2)}
    Consider the optimal transport problem
    \begin{equation}
        \mathrm{OT}_c(u, v) = \inf_{\gamma}\left\{\int\int c(x, y) d\gamma(x, y): \int \gamma(x, y) dx = u(y), \int \gamma(x, y) d y = v(x)\right\}
    \end{equation}
    where 
    \begin{equation*}
        \Gamma = \left\{ \gamma : \int \gamma(x, y) dx = u(y), \int \gamma(x, y) d y = v(x) \right\}
    \end{equation*}
    It admits the following equivalent dual formulation
    \begin{align}
        \mathrm{OT}_c(u, v) = \sup_{f, g} \left\{\int f du + \int g dv: f(x) + g(y) \leq c(x, y) \right\} 
    \end{align}
    By taking $c(x, y) = \|x-y\|^p$, we recover $\mathrm{OT}_c(u, v) = W_p(u, v)^p$.
    Let $f^*$ and $g^*$ be the solution to the above dual optimization problem.
    From the Danskin's theorem, we have
    \begin{equation} \label{eqn_first_order_variation_Wp}
        \frac{\delta}{\delta u} W_p(u, v)^p = f^*.
    \end{equation}
    In the special case of $p=1$, we know that $g^* = - f^*$ (note that the constraint can be equivalently written as $\|\nabla f\| \leq 1$), in which case $f^*$ is typically known as the critic in the WGAN framework.
    \item  \textbf{Optimal Experiment Design. (Category 1)} We take $\mathrm{s}(M) = \log \det (M)$ as example. 
    By chain rule, we have
    \begin{equation} \label{eqn_first_order_variation_log_det}
        \delta F[p^\pi] = \mathrm{Tr}\left[\left(\EV_{x\sim p^\pi}[\Phi(x)\Phi(x)^\top - \lambda \mathbb{I}]\right)^{-1} \left(\Phi(x)\Phi(x)^\top - \lambda \mathbb{I}\right) \right].
    \end{equation}
    \item  \textbf{Log-Barrier Constrained Generation. (Category 1)} By chain rule, we obtain
    \begin{equation} \label{eqn_first_order_variation_log_barrier}
        \delta F[p^\pi] = r -  \frac{\beta c}{\langle p^\pi, c\rangle - C}.
    \end{equation}
    \item \textbf{Diverse modes discovery. (Category 1)} By chain rule, we obtain
    \begin{align}
        \frac{\delta F}{\delta p^{\pi, z}} =&\ -\frac{\delta}{\delta p^{\pi, z}}\mathbb{E}_z\left[\int p^{\pi, z} \log p^{\pi, z} dx - \int p^{\pi, z} \log\left(\mathbb{E}_k[p^{\pi, k}]\right) dx\right] \notag \\
        =&\ -\mathbb{E}_z\left[\frac{\delta}{\delta p^{\pi, z}}\left(\int p^{\pi, z} \log p^{\pi, z} dx\right) - \frac{\delta}{\delta p^{\pi, z}} \left(\int p^{\pi, z} \log\left(\mathbb{E}_k[p^{\pi, k}]\right) dx\right)\right] \notag \\
        =&\ -\mathbb{E}_z\left[\log p^{\pi, z} + 1 - \log\left(\mathbb{E}_k[p^{\pi, k}]\right) - \frac{p^{\pi, z}}{\mathbb{E}_k[p^{\pi, k}]} \right] \label{eqn_first_order_variation_mode_discovery}
    \end{align}
\end{itemize}
\begin{itemize}[leftmargin=*]
    \item \textbf{Entropy. (Category 1)} As a first example, consider the entropy functional $\F(p) = -\int p \log p , dx$. By the definition of the first-order variation, we have $\frac{\delta \F}{\delta p}(p) = -1 - \log p$, and therefore $\nabla \frac{\delta \F}{\delta p}(p) = -\nabla \log p$. This gradient term can be effectively estimated using standard score approximations; see \cite{de2025provable}.
\end{itemize}
\newpage

\section{Proof for Theorem \ref{theorem:convex_case_convergence}}
\label{sec:app-theory1}
\convexCaseConvergence*
\begin{proof}
We prove this result using the framework of relative smoothness and relative strong convexity introduced in \Cref{sec:theory}.

The analysis is based on the classical mirror descent framework under relative properties~\citep{lu2018relatively}. For notational simplicity, we let $\mu_k \coloneqq p_T^{\pi_k}$, and fix an arbitrary reference density $\mu \in \mP(\Omega_{\mathrm{pre}})$. To better align the notation of our theory with existing literature, we will proceed with the \emph{convex} functional $\tilde{\G} \defeq -\G$ below.

We begin by showing the following inequality:
\begin{align}
    \tilde{\G}(\mu_k) 
    &\leq \tilde{\G}(\mu_{k-1}) + \langle \delta \tilde{\G}(\mu_{k-1}), \mu_k - \mu_{k-1} \rangle + L D_\Q(\mu_k, \mu_{k-1}) \\
    &\leq \tilde{\G}(\mu_{k-1}) + \langle \delta \tilde{\G}(\mu_{k-1}), \mu - \mu_{k-1} \rangle + L D_\Q(\mu, \mu_{k-1}) - L D_\Q(\mu, \mu_k).
\end{align}
The first inequality follows from the $L$-smoothness of $\G$ relative to $\Q$ as defined in \Cref{definition:relative_properties}. The second inequality uses the three-point inequality of the Bregman divergence~\citep[Lemma 3.1]{lu2018relatively} with $\phi(\mu) = \frac{1}{L} \langle \delta \G(\mu_{k-1}), \mu - \mu_{k-1} \rangle$, $z = \mu_{k-1}$, and $z^+ = \mu_k$.

Next, using the $l$-strong concavity of $\G$ relative to $\Q$, again from \Cref{definition:relative_properties}, we obtain:
\begin{equation}
    \tilde{\G}(\mu_k) \leq \tilde{\G}(\mu) + (L - l) D_\Q(\mu, \mu_{k-1}) - L D_\Q(\mu, \mu_k).
\end{equation}

By recursively applying the above inequality and using the monotonicity of $\G(\mu_k)$ along with the non-negativity of the Bregman divergence, we obtain~\citep{lu2018relatively}:
\begin{align}
    \sum_{k=1}^K \left(\frac{L}{L - l}\right)^k \left(\tilde{\G}(\mu_k) - \tilde{\G}(\mu) \right)
    \leq L D_\Q(\mu, \mu_0) - L \left( \frac{L}{L - l} \right)^K D_\Q(\mu, \mu_K)
    \leq L D_\Q(\mu, \mu_0).
\end{align}

Letting
\begin{equation}
    \frac{1}{C_K} \coloneqq \sum_{k=1}^K \left( \frac{L}{L - l} \right)^k,
\end{equation}
and rearranging terms, we arrive at the convergence rate:
\begin{equation}
   \tilde{\G}(\mu_K) - \tilde{\G}(\mu) \leq C_K L D_\Q(\mu, \mu_0) = \frac{l D_\Q(\mu, \mu_0)}{\left(1 + \frac{l}{L - l}\right)^K - 1}. \label{eq:last_eq_proof}
\end{equation}

Finally, the convergence rate stated in the theorem follows by observing that $\left(1 + \frac{l}{L - l}\right)^K \geq 1 + \frac{K l}{L - l}$.

\end{proof}
\newpage

\section{Proof for Theorem \ref{theorem:general_case_convergence}}
\label{sec:app-theory2}

\newtheorem{informalassumption}{Informal Assumption}
To establish our main convergence result, we introduce two additional technical assumptions that are satisfied in virtually all practical settings:

\begin{assumption}[Support Compatibility]
\label{asm:support}
We assume that the support of $p_T^{\pi_k}$ is contained in a fixed compact domain $\tilde{\Omega}$ for all $k$, and that for some $j$, we have $\text{supp}(p_j^{\pi_k}) = \tilde{\Omega}$.
\end{assumption}\vspace{-2mm}

\begin{assumption}[Precompactness]
\label{asm:precompact}
The sequence $\{\delta \entropy(p_T^{\pi_k})\}_k$ is precompact in the topology induced by the $L_\infty$ norm.
\end{assumption}

We are now ready to present the full proof. For the reader's convenience, we restate the theorem:

\generalCaseConvergence*

\begin{proof}
\newcommand{\drm}{\mathrm{d}}
We divide the proof into several key steps for ease of reading.

\para{Continuous-Time Mirror Flow}
The main idea of our proof is to relate the discrete iterates $\{p_T^k\}_{k\in \mathbb{N}}$ produced by \cref{alg:fdc_algorithm} to a continuous-time mirror flow.

Define the initial dual variable as
\[
\dual_0 = \delta \entropy(p_{\mathrm{pre}}) = -\log p_{\mathrm{pre}},
\]
and consider the gradient flow
\begin{equation}\label{eq:MF} \tag{MF}
\begin{cases}
\dot{\dual}_t = \delta \G(p_t),\\
p_t = \delta(-\entropy)^\star(\dual_t),
\end{cases}
\end{equation}
where $(-\entropy)^\star(\dual) = \log \int_\Omega e^{\dual}$ is the Fenchel dual of the negative entropy functional \cite{hsieh2019finding, hiriart2004fundamentals}. This defines the deterministic mirror flow associated with $\G$.

\para{Continuous-Time Interpolation of Iterates}
To connect the discrete algorithm with \eqref{eq:MF}, we construct a continuous-time interpolation of the dual iterates
\(\curr = \delta \entropy(p_T^{\pi_k})\). Define the effective time
\[
\curr[\efftime] = \sum_{r=0}^{k} \alpha_r,
\]
and let the interpolated process $\apt{t}$ be
\begin{equation}\label{eq:interpolation} \tag{Int}
\apt{t} = \curr + \frac{t - \curr[\efftime]}{\curr[\efftime+1] - \curr[\efftime]} (\curr[\efftime+1] - \curr).
\end{equation}

Intuitively, our convergence result follows if two conditions hold:

\begin{informalassumption}[Closeness to Continuous-Time Flow]
\label{iasm:dis2cont}
The interpolated process $\apt{t}$ asymptotically follows the dynamics of \eqref{eq:MF} as $k \to \infty$.
\end{informalassumption}

\begin{informalassumption}[Convergence of the Flow]
\label{iasm:sol}
The trajectories of \eqref{eq:MF} converge to a stationary point of $\G$.
\end{informalassumption}

To formalize this, we invoke the stochastic approximation framework of \cite{benaim2006dynamics}. Let $\points$ be the space of integrable functions on $\Omega$, and let $\flowmap$ denote the flow of \eqref{eq:MF}. We define:

\begin{definition}[Asymptotic Pseudotrajectory (APT)]
\label{def:APT}
We say $\apt{t}$ is an \ac{APT} of \eqref{eq:MF} if for all $T > 0$,
$$
\lim_{t\to\infty} \sup_{0 \leq h \leq T} \| \apt{t+h} - \flowmap_h(\apt{t}) \|_\infty = 0.
$$
\end{definition}

If $\apt{t}$ is a precompact \ac{APT}, then \cite{benaim2006dynamics} show:

\begin{theorem}[APT Limit Set Theorem]
\label{thm:apt2ict}
Let $\apt{t}$ be a precompact \ac{APT} for the flow \eqref{eq:MF}. Then, almost surely, the limit set of $\apt{t}$ is contained in the set of \ac{ICT} points of \eqref{eq:MF}.
\end{theorem}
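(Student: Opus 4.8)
The plan is to reproduce, in the present $L_\infty$ setting on $\points$, the classical limit-set argument for asymptotic pseudotrajectories. The argument is \emph{pathwise}: I will work on the full-measure event on which $\apt{t}$ is a precompact \ac{APT}, so the almost-sure qualifier in the statement is simply inherited from that of the hypotheses established in the surrounding analysis. Write $L \coloneqq \bigcap_{t\ge 0}\overline{\{\apt{s}:s\ge t\}}$ for the limit set of the interpolated dual process, and let $\cpt \coloneqq \overline{\{\apt{s}:s\ge 0\}}$, which is compact by \cref{asm:precompact}. First I would record the soft topological facts: each $\overline{\{\apt{s}:s\ge t\}}$ is nonempty, compact, and connected (it is the closure of the continuous image of a half-line), and the family is nested, so $L$ is nonempty, compact, and connected, with $\sup_{s\ge t}\dist(\apt{s},L)\to 0$ as $t\to\infty$.

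Second, I would show $L$ is invariant under the flow $\flowmap$ of \eqref{eq:MF}. For forward invariance, fix $x\in L$ and $t_n\to\infty$ with $\apt{t_n}\to x$; the \ac{APT} property gives $\sup_{0\le h\le T}\norm{\apt{t_n+h}-\flowmap_h(\apt{t_n})}_\infty\to 0$, while continuity of the flow in its initial condition gives $\flowmap_h(\apt{t_n})\to\flowmap_h(x)$, so $\apt{t_n+h}\to\flowmap_h(x)$ with $t_n+h\to\infty$, whence $\flowmap_h(x)\in L$. Backward invariance follows symmetrically: by precompactness a subsequence of $\apt{t_n-h}$ converges to some $y\in L$, and the same \ac{APT}-plus-continuity estimate over a window of length $h$ forces $\flowmap_h(y)=x$, so $x\in\flowmap_h(L)$.

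Third, and this is the crux, I would establish internal chain transitivity. Fix $x,y\in L$ and $\epsilon,T>0$. Using $\sup_{s\ge t}\dist(\apt{s},L)\to 0$, the uniform \ac{APT} bound, and uniform continuity of $(h,z)\mapsto\flowmap_h(z)$ on the compact set $[T,2T]\times\cpt$, pick $\tau$ so large that for all $t\ge\tau$ both the limit-set distance and the \ac{APT} deviation lie below a common threshold $\delta(\epsilon)$. Choose $s\ge\tau$ with $\apt{s}$ near $x$ and $s'>s$ with $\apt{s'}$ near $y$ (possible since $y\in L$ means such times recur arbitrarily late), partition $[s,s']$ into blocks of length $h_i\in[T,2T)$ at nodes $s=u_0<\cdots<u_m=s'$, and take chain nodes $z_i\in L$ to be nearest points to $\apt{u_i}$, with $z_0=x$ and $z_m=y$. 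A triangle inequality then bounds $\dist(\flowmap_{h_i}(z_i),z_{i+1})$ by three terms: the \ac{APT} bound controls $\norm{\apt{u_{i+1}}-\flowmap_{h_i}(\apt{u_i})}_\infty$, the limit-set distance controls $\dist(\apt{u_{i+1}},z_{i+1})$, and uniform continuity of the flow transfers $\dist(\apt{u_i},z_i)$ to $\dist(\flowmap_{h_i}(z_i),\flowmap_{h_i}(\apt{u_i}))$, so the sum is $<\epsilon$. This is an $(\epsilon,T)$-chain lying entirely in $L$; as $x,y,\epsilon,T$ were arbitrary and $L$ is compact and invariant, $L$ is internally chain transitive. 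In particular every point of $L$ is an \ac{ICT} point, giving the stated containment.

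The main obstacle is the uniform control in the third step. Because the block lengths $h_i$ vary in $[T,2T)$, I cannot use pointwise continuity of a single map but must invoke \emph{joint} uniform continuity of the semiflow in time and state over the compact domain $[T,2T]\times\cpt$, and I must simultaneously ensure the nodes $z_i$ are genuine elements of $L$ (not merely close) so the chain is \emph{internal}. This in turn rests on well-posedness and continuous dependence on initial data for the mirror flow \eqref{eq:MF} in the $L_\infty$ topology on $\points$, which I would justify from \cref{asm:support}, \cref{asm:precompact}, and the smoothness of the Fenchel dual $(-\entropy)^\star$; securing this regularity is the technical heart that makes the otherwise-standard chaining argument go through.
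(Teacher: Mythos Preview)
Your proposal is a faithful reconstruction of the classical Bena\"im argument, and the three-step plan (topological properties of $L$, invariance, then the $(\epsilon,T)$-chaining construction) is exactly how the result is proved in the stochastic approximation literature. There is no mathematical gap in the outline; the one delicate point you flag---joint uniform continuity of $(h,z)\mapsto\flowmap_h(z)$ on a compact window and well-posedness of the mirror flow in the $L_\infty$ topology on $\points$---is indeed what separates the finite-dimensional proof from the present setting, and you are right to isolate it.

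The comparison with the paper, however, is that the paper does \emph{not} prove this statement at all: it is quoted as an external result, attributed to \cite{benaim2006dynamics} (specifically Theorem~5.7 there), and used as a black box inside the proof of \cref{theorem:general_case_convergence}. So you have gone well beyond what the paper does. Your write-up is effectively a sketch of Bena\"im's original proof, adapted to the dual-variable space; if you wanted to align with the paper you would simply cite \cite{benaim2006dynamics} and move on, but if you prefer to keep the argument self-contained, what you have is correct and complete in outline, with the infinite-dimensional regularity of \eqref{eq:MF} being the only piece that would need careful justification in a full proof.
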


The proof of our result thus follows from two claims:
\begin{enumerate}
    \item The iterates $\{\curr\}$ generate a precompact \ac{APT} under Assumptions \ref{asm:support} and\ref{asm:approximate}.
    \item The \ac{ICT} set of \eqref{eq:MF} consists only of stationary points of $\G$.
\end{enumerate}
The remainder of the proof is devoted to verifying these two claims.

\para{Convergence to Stationary Points}
The second claim holds since the mirror flow admits $\G$ as a strict Lyapunov function, and thus Corollary 6.6 in \cite{benaim2006dynamics} ensures convergence of the APT to the set of stationary points of $\G$, provided that the set of equilibria is countable.

For the first claim, Assumptions \ref{asm:support} and \ref{asm:precompact} ensure that the interpolated process is well-defined and precompact, while Assumption \ref{asm:approximate} allows us to apply standard stochastic approximation arguments \cite{karimi2024sinkhorn}. We conclude the proof by applying \cref{thm:apt2ict}.

\para{Quantitative Approximation to the Mirror Flow}
For the first claim, we invoke the stochastic approximation techniques applied to the dual variables (see, e.g., \cite{benaim2006dynamics, karimi2024sinkhorn}) to obtain the following bound:
\begin{equation}\label{eq:quantitative}
\sup_{0 \le s \le T} \| \apt{t+s} - \flowmap_s(\apt{t}) \| \le C(T) \big[ \Delta(t-1, T+1) + b(T) + \gamma(T) \big],
\end{equation}
where $C(T)$ depends only on $T$, $\Delta(t-1,T+1)$ captures cumulative noise fluctuations, and $b(T),\gamma(T)$ are the bias and step-size terms over the interval. This explicitly bounds the deviation of the interpolated process from the deterministic mirror flow.

Under the noise and bias conditions in Assumption~\ref{asm:approximate}, standard stochastic approximation results \cite{benaim2006dynamics, karimi2024sinkhorn} imply
\[
\lim_{T \to \infty} \Delta(t-1, T+1) = \lim_{T \to \infty} b(T) = 0.
\]
Hence, $\apt{t}$ is an APT of the mirror flow.

\para{Conclusion}
Assuming precompactness of the dual iterates (stated as Assumption~\ref{asm:precompact}), Theorem 5.7 in \cite{benaim2006dynamics} implies that the limit set of $\apt{t}$ is internally chain transitive (ICT) for the mirror flow. Combining the quantitative approximation \eqref{eq:quantitative}, the APT argument, and the limit set characterization, we conclude that the discrete iterates converge to stationary points of $\G$, completing the proof.
\end{proof}

\newpage

\section{Detailed Example of Algorithm Implementation}
\label{sec:alg_implementation}
\subsection{Implementation of \LinearFineTuningSolver}
\looseness -1 To ensure completeness, below we provide pseudocode for one concrete realization of a \LinearFineTuningSolver as in Eq. \eqref{eq:opt_first_variation} using a first-order optimization routine. In particular, we describe exactly the version employed in Sec. \ref{sec:experiments}, which builds on the Adjoint Matching framework~\citep{domingo2024adjoint}, casting linear fine-tuning as a stochastic optimal control problem and tackling it via regression.

Let $u^{pre}$ be the initial, pre-trained vector field, and $u^{finetuned}$ its fine-tuned counterpart. We also use $\bar{\alpha}$ to refer to the accumulated noise schedule from \citep{ho2020denoising} effectively following the flow models notation introduced by Adjoint Mathing~\citep[][Sec. 5.2]{domingo2024adjoint}. The full procedure is in Algorithm \ref{alg:step_implemented}.

\begin{algorithm}[H]
    \caption{\LinearFineTuningSolver (Adjoint Matching \citep{domingo2024adjoint}) based implementation}
    \label{alg:step_implemented}
        \begin{algorithmic}[1]
        \State{\textbf{Input: } $N: $ number of iterations, $u^{pre}: $ pre-trained flow vector field, $\eta$ regularization coefficient as in Eq. \eqref{eq:opt_first_variation}, $h:$ step size, $\nabla f$: reward function gradient, $m$ batch size}
        \State{\textbf{Init:} $u^{finetuned} \coloneqq u^{pre}$ with parameter $\theta$}
        \For{$n=0, 1, 2, \hdots, N-1$}
            \State{Sample $m$ trajectories $\{X_t\}_{t=1}^T$ via memoryless noise schedule~\citep{domingo2024adjoint}, \eg 
            \begin{equation*}
                \text{sample } \epsilon_t \sim \mathcal{N}(0,I), \; X_0 \sim \mathcal{N}(0,I) \text{, then:}
            \end{equation*}
           \begin{equation*}
               X_{t+h} = X_t + h\left(2v_{\theta}^{finetuned}(X_t, t) - \frac{\bar{\alpha}_t}{\alpha_t}X_t\right) + \sqrt{h} \sigma(t) \epsilon_t
           \end{equation*}
        Use reward gradient: $$\Tilde{a}_T = - \frac{1}{\eta} \nabla  f(X_1)$$
        For each trajectory, solve the lean adjoint ODE, see \citep[Eq. 38-39]{domingo2024adjoint}, from $t=1$ to $0$, \eg:
        \begin{equation*}
            \Tilde{a}_{t-h} = \Tilde{a}_t + h\Tilde{a}_t^\top \nabla_{X_t}\left(2 u^{pre}(X_t,t) - \frac{\bar{\alpha}_t}{\alpha_t}X_t \right)
        \end{equation*}
        \looseness -1 Where $X_t$ and $\Tilde{a}_t$ are computed without gradients, \ie $X_t = \texttt{stopgrad}(X_t), \Tilde{a}_t = \texttt{stopgrad}(\Tilde{a}_t)$.
        For each trajectory compute the Adjoint Matching objective \citep[Eq. 37]{domingo2024adjoint}:
        \begin{equation*}
            \mathcal{L}_{\theta} = \sum_{t=0}^{1-h} \| \frac{2}{\sigma(t)}\left( u_{\theta}^{finetuned}(X-t,t) - u^{pre}(X_t,t) \right) + \sigma(t) \Tilde{a}_t \|
        \end{equation*}
        Compute the gradient $\nabla_\theta \mathcal{L}(\theta)$ and update $\theta$.}
        \EndFor
        \State{\textbf{output: } Fine-tuned noise predictor $u_{\theta}^{finetuned}$}
        \end{algorithmic}
\end{algorithm}

\subsection{Discussion: computational complexity and cost of \AlgNameShort}
\AlgNameLong (see Algorithm \ref{alg:fdc_algorithm}) is a sequential fine-tuning scheme, which performs $K$ iterations of a base fine-tuning oracle, as shown in Algorithm \ref{alg:fdc_algorithm}. Typically, as for the case of Adjoint Matching~\citep{domingo2024adjoint}, which is contextualized in Algorithm \ref{alg:step_implemented}, the inner oracle also performs $N$ iterations to solve the classic fine-tuning problem. As a consequence, at first glance, this lead to \AlgNameShort having a computational complexity scaling linearly in $K$ the one of classic fine-tuning. Nonetheless, this does not seem to capture well the practical computational cost. In particular, we wish to point out the two following observations:
\begin{itemize}
    \item As discussed for the molecular design experiment in Sec. \ref{sec:experiments} and further in Appendix \ref{sec:experimental_detail}, the \AlgNameShort scheme might work well even with a very approximate oracle to solve the entropy-regularized control problem at each iteration.
    \item For many real-world problems a very small number of iterations $K$ might be sufficient to approximate the non-linear functional sufficiently well and hence obtain useful fine-tuned models. This is shown in text-to-image bridge design experiment in Sec. \ref{sec:experiments} and in Appendix \ref{sec:experimental_detail}. In this case, merely $K=2$ iterations of \AlgNameShort lead to promising results.
\end{itemize}
\newpage

\section{Experimental Details}
\label{sec:experimental_detail}
\subsection{Used computational resources}
We run all experiments on a single Nvidia H100 GPU. 

\subsection{Experiments in Illustrative Settings}

\paragraph{Shared experimental setup.} For all illustrative experiments we utilize Adjoint Matching (AM)~\citep{domingo2024adjoint} for the entropy-regularized fine-tuning solver in \cref{alg:fdc_algorithm}. Moreover, the stochastic gradient steps within the AM scheme are performed via an Adam optimizer.

\paragraph{Risk-averse reward maximization for better worst-case validity or safety.} 
In this experiment, we execute \AlgNameShort for $K=2$ iterations with a total of $1000$ gradient steps within each iteration, AM solver (within the \AlgNameShort scheme) with learning rate of $2e^{-2}$, $\alpha = 10^9$, and $\eta = 10$. Meanwhile, the AM baseline, is run for $1000$ gradient steps with $\alpha = 0.2857$, and learning rate of $1e^{-5}$. The resulting CVaR is computed via the standard torch quantile method. The values of $\beta$ reported in the main paper effectively refers to the value of $1-\beta$. In the following, we report mean and sample standard deviation of AM and \AlgNameShort over $5$ seeds.
\begin{figure}[h]
\centering
    \begin{tabular}{@{}ll@{}}
            \toprule
               & \quad CVaR$_\beta$              \\
            \midrule
            Pre-trained    $\;$  & \; $256.8 \pm 8.15$ \\
            \AlgNameShortAM  &\; $225.3 \pm 78.9$\\
            \AlgNameShort ($1$ iteration)  &\; $221.1 \pm 73.2$\\
            \AlgNameShort ($2$ iteration)  &\; $90.0 \pm 0.05$\\
            \bottomrule
    \end{tabular}
    \caption{Statistical analysis for CVaR$_\beta$.}
\end{figure}

\paragraph{Novelty-seeking reward maximization for discovery.} 
We run \AlgNameShort for $K=2$ iterations with a total of $1000$ gradient steps within each iteration, AM solver (within the \AlgNameShort scheme) with learning rate of $3e^{-6}$, $\alpha = 10^5$, and $\eta = 0.625$, and $8000$ samples are used to estimate the first variation gradient as explained in Appendix \ref{sec:app_functionals_details}. Meanwhile, the AM baseline, is run for $1000$ gradient steps with $\alpha = 0.666$, and learning rate of $1e^{-5}$. The resulting SQ is computed via the standard torch quantile method. In the following, we report mean and sample standard deviation of AM and \AlgNameShort over $5$ seeds.
\begin{figure}[h]
\centering
    \begin{tabular}{@{}ll@{}}
            \toprule
               & \quad SQ$_\beta$              \\
            \midrule
            Pre-trained    $\;$  & \; $59.6 \pm 7.5$ \\
            \AlgNameShortAM  &\; $56.7 \pm 2.7$\\
            \AlgNameShort ($1$ iteration)  &\; $55.0 \pm 0.04$\\
            \AlgNameShort ($2$ iteration)  &\; $452.5 \pm 250.0$\\
            \bottomrule
    \end{tabular}
    \caption{Statistical analysis for SQ$_\beta$ utility.}
\end{figure} 

\paragraph{Reward maximization regularized via optimal transport distance. } 
Within this experiment, we present two runs of \AlgNameShort, namely \AlgNameShort-A and \AlgNameShort-B, compared against AM.  Both \AlgNameShort-A and \AlgNameShort-B have been run for $K=6$ iterations of \AlgNameShort, with $\alpha = 0.1$, AM oracle learning rate of $1e^{-6}$, $\eta = 6.666$. Both their discriminators to solve the dual OT problem as presented in Appendix \ref{sec:app_functionals_details} and mentioned within Sec. \ref{sec:algorithm}, have been learned via a simple MLP architecture with $800$ gradient steps, by enforcing the $1$-Lip. condition via the standard gradient penalty technique with regularization strength of $\lambda_{GP} = 10.0$ and learning rate of $1e^{-4}$. In particular, \AlgNameShort-A is based on the distance defined, for two $2$-dimensional points $x=(x_1, x_2)$ and $y=(y_1, y_2)$ by:
\begin{equation*}
    d_A(x,y) = \sqrt{(x_1-y_1)^2 + (K (x_2 - y_2))^2}
\end{equation*}
Analogously, \AlgNameShort-B leverages $d_B$ defined as:
\begin{equation*}
    d_A(x,y) = \sqrt{(K(x_1-y_1))^2 + (x_2 - y_2)^2)}
\end{equation*}
Where $K=7$ in both cases. On the other hand, the AM baseline is run for $1000$ gradient steps with learning rate of $1e^{-3}$ and $\alpha = 1.538$.

\paragraph{Conservative manifold exploration. } 
We ran \AlgNameShort for $K=50$ iterations  and $2500$ gradient steps in total with $\eta=10$ and $\alpha=0.0,0.01,0.1,0.5,1.0$. We set the AM learning rate to $2e^{-4}$ and sample trajectories of length $400$ for computing the AM loss. 
\begin{figure}[h]
\centering
\begin{tabular}{@{}lccc@{}}
          \toprule
          & $\EV[r(x)]$ & $W_1^A$ & $\Delta \%$ \\
          \midrule
          Pre & $29.5 \pm 0.0$ & $0$ & $-$ \\
          \AlgNameShortAM & $35.08 \pm 0.04$ & $4.68 \pm 0.0$ & $100$ \\
          \AlgNameShort-A & $35.38 \pm 0.04$ & $1.92 \pm 0.03$ & $288 \pm 15.0$ \\
          \bottomrule
\end{tabular}%
\caption{Statistical analysis for $W_1$ divergence.}
\end{figure}

\subsection{Further Ablations}
\paragraph{Runtime}
The only input hyperparameter in Algorithm \ref{alg:fdc_algorithm} is the number of iterations $K$. Towards evaluating its effect on algorithm execution, in the following, we consider an experimental setup analogous to "Risk-averse reward maximization for better worst-case validity or safety" experiment in Fig. \ref{fig:experiments_fig_1} (top row), and evaluate the effect of different numbers of iterations ($K$) on run-time and solution quality. We report results in Fig. \ref{fig:runtime_fig1} showing the depending on hyper-parameter $K$ for $\eta=20$. As one can expect, for small step-sizes $\frac{1}{\eta}$, the runtime and solution quality scale nearly linearly in $K$ given a fixed number of iterations $N$ of the entropy-regularized solver (see Apx. \ref{sec:alg_implementation} for the definition of $N$).
\begin{figure}[h]
\centering
\begin{tabular}{@{}lcc@{}}
\toprule
\textbf{K} & \textbf{Runtime (s)} & \textbf{CVaR estimate (via 1000 samples)} \\
\midrule
0 & 0.00   & 254.49 \\
1 & 44.71  & 241.37  \\
2 & 89.38  & 167.04  \\
3 & 133.31 & 288.72 \\
4 & 176.79 & 271.00 \\
5 & 220.37 & 84.89  \\
6 & 264.06 & 96.55  \\
\bottomrule
\end{tabular}
\caption{Runtime vs.\ CVaR estimate as a function of $K$, $\eta=20$.}
\label{fig:runtime_fig1}
\end{figure}
As one can expect by interpreting $\eta$ in Eq. \ref{eq:opt_first_variation} as a learning-rate parameter, by choosing smaller values of  $\eta$ convergence can be achieved with less iterations $K$. In Fig. \ref{fig:runtime_fig2}  we report the same evaluation with $\eta=10$.
\begin{figure}[h]
\centering
\begin{tabular}{@{}lcc@{}}
\toprule
\textbf{K} & \textbf{Runtime (s)} & \textbf{CVaR estimate (via 1000 samples)} \\
\midrule
0 & 0.00   & 254.49 \\
1 & 44.23  & 249.09  \\
2 & 87.78  & 90.0  \\
\bottomrule
\end{tabular}
\caption{Runtime vs.\ CVaR estimate as a function of $K$, $\eta=10$.}
\label{fig:runtime_fig2}
\end{figure}
The above tables hint at the fact that the \AlgNameShort fine-tuning process can be interpreted experimentally as classic (convex or non-convex) optimization, although on the space of generative models, with learning rate (or step-size) controlled by $\eta$.
\paragraph{Approximate Oracle}
In the following, we investigate the use of an approximate entropy-regularized control solver oracle (i.e., performing approximate entropy-regularized fine-tuning at each iteration of \AlgNameShort), showing that this can also lead to optimality via increasing the number of iterations $K$. In the following (see Fig. \ref{fig:approximate_oracle_exp}) we consider $N=100$ instead of $N=1000$ (as in previous experiments) and use $K=5$ showing that \AlgNameShort can retrieve the same final fine-tuned model as in Fig. \ref{fig:runtime_fig2} using only one tenth gradient steps (i.e.  $N=100$ instead of $N=1000$) for the inner oracle.
\begin{figure}[h]
\centering
\begin{tabular}{@{}lcc@{}}
\toprule
\textbf{K} & \textbf{Runtime (s)} & \textbf{CVaR estimate (via 1000 samples)} \\
\midrule
0 & 0.00   & 254.49 \\
1 & 5.11  & 221.1  \\
2 & 10.01  & 194.69  \\
3 & 14.73 & 90.01 \\
4 & 19.60 & 91.1 \\
5 & 24.60 & 90.0  \\
\bottomrule
\end{tabular}
\caption{Runtime vs.\ CVaR estimate as a function of $K$, $\eta=10$, $N=100$.}
\label{fig:approximate_oracle_exp}
\end{figure}

\subsection{Real-World Experiments}

\paragraph{Molecular design for single-point energy minimization. }
In this experiment \AlgNameShort is run for $K=10$ iterations, with merely $2$ gradient steps at each iteration (\ie the AM oracle is very approximate), AM learning rate of $1e^{-4}$, $\eta = 0.01$ and $\alpha = 0$. Meanwhile, the AM baseline is run for $240$ gradient steps with $\alpha = 0.0045$.  

\paragraph{Text-to-image bridge designs conservative exploration.} For this experiment we ran \AlgNameShort on a single Nvidia H100 GPU, with $K=2$, $\eta=200$, $\alpha=0.001$ and a $100$ gradient steps in total. Similarly to previous work, we tuned the vector field resulting from applying classifier-free guidance with guidance scale $w=8$ in SD-$1.5$.

\newpage

\end{document}